\documentclass{article}

\usepackage[english]{babel}

\usepackage[letterpaper,top=2cm,bottom=2cm,left=3cm,right=3cm,marginparwidth=1.75cm]{geometry}

\usepackage{amsmath}
\usepackage{amsthm}
\usepackage{amssymb}
\usepackage{graphicx}

\usepackage{algorithm}  
\usepackage{algpseudocode}  
\usepackage{makecell}
\usepackage{booktabs}
\usepackage{multirow}
\usepackage{diagbox}
\usepackage{cite}
\theoremstyle{definition}
\newtheorem{theorem}{Theorem}
\newtheorem{definition}{Definition}
\usepackage[colorlinks=true, allcolors=blue]{hyperref}
\usepackage{caption}
\usepackage{subcaption}
\usepackage{ulem}
\usepackage{indentfirst}
\usepackage{authblk}
\usepackage{enumerate}
\title{Quaternion-based dynamic mode decomposition for background modeling in color videos}
\author[a]{Juan Han\thanks{E-mail: juanhan0604@163.com}}
\author[a]{Kit Ian Kou\thanks{Corresponding author: kikou@umac.mo }}
\author[a]{Jifei Miao\thanks{E-mail: jifmiao@163.com}}

\affil[a]{Department of Mathematics, Faculty of
	Science and Technology, University of Macau, Macau 999078, China}

\date{}


\begin{document}

\maketitle

\begin{abstract}
Scene Background Initialization (SBI) is one of the challenging problems in computer vision. Dynamic mode decomposition (DMD) is a recently proposed method to robustly decompose a video sequence into the background model and the corresponding foreground part. However, this method needs to convert the color image into the grayscale image for processing, which leads to the neglect of the coupling information between the three channels of the color image. In this study, we propose a quaternion-based DMD (Q-DMD), which extends the DMD by quaternion matrix analysis, so as to completely preserve the inherent color structure of the color image and the color video. We exploit the standard eigenvalues of the quaternion matrix to compute its spectral decomposition and calculate the corresponding Q-DMD modes and eigenvalues. The results on the publicly available benchmark datasets prove that our Q-DMD outperforms the exact DMD method, and experiment results also demonstrate that the performance of our approach is comparable to that of the state-of-the-art ones.   
\end{abstract}

\par \textbf{Keywords:} background model initialization; color videos; dynamic mode decomposition; quaternion

\section{Introduction}

\indent
Scene Background Initialization (SBI) remains an important task in the field of scene background modeling, and it has a wide range of applications, such as video surveillance, video segmentation, video compression, video inpainting, privacy protection for videos, computation photography, and so on. The aim of SBI is to extract a background model with no foreground objects from a video sequence whose background is occluded by any number of foreground objects \cite{maddalena2015towards}. In real scenes, there exist challenges to achieve this aim 
such as sudden illumination changes, night videos, low framerate, dynamic background, camera jitter, and so on. In order to address these challenges, various background modeling methods have been proposed over the last few decades. Background modeling approaches based on deep learning have been developed in recent years. Schofield et al. \cite{schofield1996system} were the first to propose a Random Access Memory (RAM) based neural network method to identify sections of the background scene in each test image. However, this approach requires the background of the scene to be correctly represented by the images, and there is no background maintenance stage. Maddalena and Petrosino \cite{maddalena2008self} proposed a method based on self organization through artificial neural networks named Self Organizing Background Subtraction (SOBS) which learns background motion trajectories in a self organizing manner. Convolutional Neural Networks (CNNs) have also been used for background modeling by Braham and Droogenbroeck \cite{braham2016deep}, Bautista et al. \cite{bautista2016convolutional} and Cinelli \cite{cinelli2017anomaly}. Halfaoui et al. \cite{halfaoui2016cnn} proposed a reliable CNN-based approach to obtain the initial background of a scene by using just a small set of frames containing foreground objects. A detailed overview of deep learning based methods, we recommend \cite{bouwmans2019deep}.

In addition, among all the methods for separating background and foreground, one of the representative frameworks is to decompose video sequences into low rank matrices (i.e., background) and sparse matrices (i.e., foreground). Based on this viewpoint, Cand$\grave{e}$s et al. \cite{candes2011robust} proposed the first framework of robust principal component analysis (RPCA). There are many variants of RPCA according to the difference of decomposition, loss function, the optimization problem and the solver used. The recent review \cite{bouwmans2017decomposition} provides a detailed overview of some traditional and state-of-the-art methods. By incorporating spatiotemporal sparse subspace clustering into the framework of RPCA, Javed et al. \cite{javed2017background, javed2018moving} proposed a Motion-assisted Spatiotemporal Clustering of Low-rank (MSCL) approach for both the background estimation and foreground segmentation. Laugraud et al. \cite{laugraud2017labgen} developed a method named LaBGen which combines a pixel-wise temporal median filter and a patch selection mechanism based on motion detection for the generation of the stationary background of the scene. 
 
A data matrix can also be decomposed into a low rank matrix and a sparse matrix by using dynamic mode decomposition (DMD). This method can easily distinguish the static background and dynamic foreground from the data matrix by differentiating between the near-zero temporal Fourier modes and the remaining modes bounded away from the origin \cite{kutz2015multi}. DMD also has many variants, such as standard DMD \cite{brunton2019data}, exact DMD \cite{brunton2019data}, multi-resolution DMD \cite{kutz2015multi} and compressed DMD \cite{erichson2019compressed}. However, when it comes to the application of background modeling of video frames, these variant methods deal with gray-scale video sequences directly. For color video processing, DMD and its variants ignore the mutual connection among red, green and blue channels, because these methods are applied to red, green and blue channels separately, which may cause color distortion when separating the foreground and background of color video frames. 

In this paper, we use a quaternion matrix to represent a color image, and then extend the DMD method to the quaternion system. Many studies have shown that quaternion representation of color images can achieve excellent results in color image processing problems, such as \cite{ell2006hypercomplex, le2004singular, hosny2019new, li2015quaternion} and so on. The quaternion-based method (Q-DMD) is to represent a pixel of a color image with a pure quaternion, that is, the three channels RGB of a color pixel correspond to the three imaginary parts of a quaternion, respectively, which can be shown by the following formula:
\[ \dot p = 0+p_R \, \cdot i+p_G \,  \cdot j +p_B \,  \cdot k,\]
where $\dot p$ represents a color pixel, and $p_R$, $p_G$, $p_B$ correspond to the pixel values of the three channels RGB of this color pixel, and $i$, $j$, $k$ are the three imaginary units of a quaternion. 

The main contribution of this paper is to extend the DMD method to the quaternion system, so as to make full use of the coupling information between the three color channels RGB of a color video sequence when separating the video frames into background model and foreground component. To this end, we need to obtain the spectral decomposition of a quaternion matrix. However, a quaternion matrix has infinite right eigenvalues \cite{zhang1997quaternions}, and its left eigenvalues are difficult to obtain. Therefore, in this paper, we establish the spectral decomposition of a quaternion matrix by using the standard eigenvalues of the quaternion matrix and the corresponding eigenvectors \cite{zhang1997quaternions}.

The outline of the rest of this paper is organized as follows. Section 2 sets some notations and introduces preliminaries for quaternion algebra. Section 3 briefly reviews the theory of the DMD method. In Section 4, we introduce the eigenvalues, eigenvectors, spectral decomposition, and singular value decomposition of a quaternion matrix. Then we propose the quaternion-based DMD (Q-DMD) which can be used to separate a color video sequences into an approximate low-rank structure and a sparse structure. Section 5 provides some experiments on the basis of benchmark datasets to illustrate the performance of our approach, and compare our method with DMD and some state-of-the-art methods. Finally, some conclusions are given in Section 6.

\section{Notations and preliminaries}
In this section, we first summarize some main notations and then give a brief review of some basic knowledge of quaternion algebra. In addition, we recommend \cite{girard2007quaternions} for a more complete introduction of quaternion algebra.
\subsection{Notations}
In this paper, the set of real numbers, the set of complex numbers and the set of quaternions are denoted by $\mathbb{R}$, $\mathbb{C}$, $\mathbb{H}$, respectively. In addition, $C^{+}$ denotes the set of complex numbers with nonnegative imaginary part. Lowercase letters, e.g., $a$, boldface lowercase letters, e.g., $\mathbf{a}$, boldface capital letters, e.g., $\mathbf{A}$ represent scalars, vectors and matrices in real and complex fields, respectively. A quaternion scalar, a quaternion vector and a quaternion matrix are written as $\dot{q}$, $\mathbf{\dot{q}}$, $\mathbf{\dot{Q}}$, respectively. $(\cdot)$, $(\cdot)^{-1}$, $(\cdot)^{\dagger}$, $(\cdot)^{T}$, $(\cdot)^{\ast}$, $(\cdot)^{H}$ represent the conjugation, inverse, pseudoinverse, transpose, conjugation, and conjugate transpose, respectively.

\subsection{Basic knowledge of quaternion algebra}
Quaternion was first proposed by W.R. Hamilton in 1843 \cite{hamilton1844ii}. Let  $\dot{q} \in \mathbb{H}$ be a quaternion, 
\[\dot{q}=q_0+q_1 i +q_2 j+q_3 k,\]
where $q_0$, $q_1$, $q_2$, $q_3 \in \mathbb{R}$, and $i$, $j$, $k$ are imaginary number units which obey the quaternion rules that
\[\begin{cases}
i^2 =j^2=k^2=ijk=-1,\\
ij=-ji=k, jk=-kj=i, ki=-ik=j.
\end{cases}
\]
For every quaternion $\dot{q}=q_0+q_1 i +q_2 j+q_3 k$, it can be uniquely rewritten as $\dot{q}=q_0+q_1 i +(q_2 +q_3 i) j = c_1+c_2 j$, where $c_1$, $c_2 \in \mathbb{C}$. In addition, a quaternion $\dot{q} \in \mathbb{H}$ can be decomposed into a scalar part $S(\dot{q})$ and a vector part $\mathbf{V}(\dot{q})$, that is
\[\dot{q} = S(\dot{q}) + \mathbf{V}(\dot{q})\]
 where $S(\dot{q})= q_0 \in \mathbb{R}$ which is called the real part of $\dot{q}$, and $\mathbf{V}(\dot{q}) = \dot{q} - S(\dot{q}) = q_1 i +q_2 j+q_3 k$ is called the vector part. A quaternion whose real part is equal to 0, i.e., $S(\dot{q}) =0$, is called a pure quaternion. The set of pure quaternions is denoted as $\mathbf{V}(\mathbb{H})$. 
 
For a quaternion $\dot{q}$, its conjugate quaternion is defined as 
$\dot{q}^{\ast}=q_0-q_1 i -q_2 j-q_3 k.$ And the norm of a quaternion $\dot{q}$ is defined as
$\lvert{\dot{q}} \rvert =\sqrt{qq^{\ast}}=\sqrt{q^{\ast}q}=\sqrt{q_0^2+q_1^2+q_2^2+q_3^2}.$
Different from the complex number field, the commutative law is generally not valid in the quaternion system, i.e., $\dot{q}_1 \dot{q}_2 \neq \dot{q}_2 \dot{q}_1$ in general. In addition, a pure quaternion which satisfies that $\lvert{\dot \xi} \rvert=1$ is named a pure unit quaternion.

Now, we introduce the definition of exponential and logarithm of a quaternion. 
Every (non-null) pure quaternion $\dot{\xi}$ can be presented by $\dot \xi = \lvert{\xi}\rvert {\dot \xi}_\text{pu}$, where ${\dot \xi}_\text{pu}$ is a pure unit quaternion (i.e., ${\dot \xi}_\text{pu} \in \mathbf{V}(\mathbb{H})$, and $\lvert{{\dot \xi}_\text{pu}}\rvert=1)$. 

\begin{definition} \label{exppure} (The exponential function of a pure quaternion \cite{ell2014quaternion}) 
Assuming that $\dot{\xi}$ is a (non-null) pure quaternion, then its exponential function exp : $\mathbf{V}(\mathbb{H}) \rightarrow \mathbb{H}$ can be defined by exploiting its power series expansion, which is given by the following formula:
\begin{equation}
\begin{aligned}
e^{\dot{\xi}} &=\sum_{n=0}^{+\infty}\frac{{\dot{\xi}}^n}{n!}\\
              &=\sum_{n=0}^{+\infty}\frac{\lvert{{\dot \xi}\rvert}^n {{\dot \xi}_\text{pu}}^n }{n!}\\
              &=\sum_{n=0}^{+\infty}(-1)^m\frac{{\lvert{\dot \xi}\rvert}^{2m}}{(2m)!}+{\dot \xi}_\text{pu}\sum_{n=0}^{+\infty}(-1)^m\frac{{\lvert{\dot \xi}\rvert}^{2m+1}}{(2m+1)!}\\
              & = \text{cos} \lvert{\dot \xi}\rvert + {\dot \xi}_\text{pu}\text{sin} \lvert{\dot \xi}\rvert,
\end{aligned}
\end{equation}
since the pure unit quaternion ${\dot \xi}_\text{pu}$ satisfies the following formula:
\[{{\dot \xi}_\text{pu}}^n = \begin{cases}
(-1)^m, \qquad \  \  \  \ \text{if} \quad  n=2m,\\
(-1)^m{\dot \xi}_\text{pu}, \qquad \text{if} \quad n=2m+1.
\end{cases}
\]
\end{definition}
The exponential function of a pure quaternion can be easily represented by cosine and sine functions just as in the complex case. The difference is that the axis ${\dot \xi}_\text{pu}$ is a pure unit quaternion, while the argument is the modulus of ${\dot \xi}$. Obviously, the exponential of a pure quaternion is a full quaternion, with real part
$\text{cos} \lvert{\dot \xi}\rvert$ and vector part ${\dot \xi}_\text{pu}\text{sin} \lvert{\dot \xi}\rvert$.
Another important property different from the complex exponential case is that, in general, the product of two exponents of a pure quaternion is not an exponent whose argument is equal to the sum of the original exponential arguments, that is, for two pure quaternion $\dot{p}$, $\dot{q}$ with different pure unit quaternions ${\dot{p}}_{\text{pu}}$ and ${\dot{q}}_{\text{pu}}$, then
\[e^{{\lvert{\dot{p}}\rvert}{\dot{p}}_\text{pu}} e^{{\lvert{\dot{q}}\rvert}{\dot{q}}_\text{pu}} \neq e^{{\lvert{\dot{p}}\rvert}{\dot{p}}_\text{pu}+{\lvert{\dot{q}}\rvert}{\dot{q}}_\text{pu}}. \]

The exponential function of full quaternions can be defined based on Definition \ref{exppure}.
\begin{definition} \label{fullexpo} (The exponential function of a full quaternion \cite{ell2014quaternion})
For a quaternion $\dot{q}$, its exponential function exp : $\mathbb{H} \rightarrow \mathbb{H}$ is given by
\begin{equation}
\begin{aligned}
e^{\dot{q}} &=\sum_{n=0}^{+\infty}\frac{{\dot{q}}^n}{n!}\\
              &=e^{S(\dot q)}e^{\mathbf{V}(\dot{q})}\\
              & = e^{S(\dot q)}(\text{cos} \lvert{\mathbf{V}(\dot{q})}\rvert + \widetilde{\mathbf{V}(\dot{q})}\text{sin} \lvert{\mathbf{V}(\dot{q})}\rvert),
\end{aligned}
\end{equation}
where $\mathbf{V}(\dot{q}) = \lvert{\mathbf{V}(\dot{q})}\rvert \widetilde{\mathbf{V}(\dot{q})} \in \mathbf{V}(\mathbb{H})$, and $\widetilde{\mathbf{V}(\dot{q})}$ is the normalized vector part of $\dot{q}$.
\end{definition}

\begin{definition} \label{logarithm} (The logarithm of a quaternion \cite{ell2014quaternion})
The logarithm of the quaternion $\dot q$ is the inverse of the exponential function. This means that for $\dot p$, $\dot q \in \mathbb{H}$, if 
\[e^{\dot p}={\dot q},\]
then
\begin{equation}
 \dot p= \text{ln} \, \dot q.
\end{equation}
\end{definition}
There also exists an expression for the logarithm of $\dot q=q_0+q_1 i +q_2 j+q_3 k$ in terms of its elements, which can be considered as a generalization of the logarithm of a complex number and is given by the following formula:
\begin{equation}
   \begin{aligned}
       \text{ln} \, \dot q &= \text{ln} \, \lvert{\dot q}\rvert + \text{ln} \, {\dot q}_{pu}
       &=\text{ln} \, \lvert{\dot q}\rvert + \dot{\mu}_{\dot q}\phi_{\dot q},
   \end{aligned} 
\end{equation}
where 
\[\begin{cases}
\dot{\mu}_{\dot q} = \frac{q_1 i +q_2 j+q_3 k}{\sqrt{q_1^2 +q_2^2+q_3^2}},\\
\phi_{\dot q}= \text{arctan}(\frac{\sqrt{q_1^2 +q_2^2+q_3^2}}{q_0}).
\end{cases}\]

Similarly, a quaternion matrix $\mathbf{\dot Q}=(\dot q_{m,n}) \in \mathbf{H}^{M \times N}$ is denoted as $\mathbf{\dot Q} = \mathbf{ Q}_0 +\mathbf{ Q}_1 i +\mathbf{ Q}_2 j+\mathbf{ Q}_3 k$ with $\mathbf{ Q}_t \in  \mathbf{R}^{M \times N} (t=0, \, 1, \, 2, \, 3)$. If $\mathbf{ Q}_0=0$, then the corresponding quaternion matrix is called a pure quaternion matrix.

\section{Dynamic mode decomposition in real field}

DMD is a data-driven method which spatiotemporally decomposes data from snapshots or measurements of a given system in time into a set of dynamic modes. In this section, we will give a brief review of the DMD theory.

 It is assumed here that the number of data points collected at a given time is $n$, with the number of unified samples being $m$, and the timestep is denoted by $\Delta t$. We use a vector $\mathbf{x}_l \in \mathbb{R}^n$ to denote the $n$ data points collected at the time $t_l$, $l=1,2,\dots, m$. Then the data can be arranged into two matrices, $\mathbf{X}$, $\mathbf{Y}$:
\[\mathbf{X}=
 \begin{bmatrix}
  \mathbf{x}_1 & \mathbf{x}_2 & \mathbf{x}_3 & \cdots & \mathbf{x}_{m-1}
 \end{bmatrix},\]
\[ \mathbf{Y}=
 \begin{bmatrix}
  \mathbf{x}_2 & \mathbf{x}_3 & \mathbf{x}_4 & \cdots & \mathbf{x}_{m}
 \end{bmatrix},\]
 where $\mathbf{X}, \mathbf{Y} \in \mathbb{R}^{n \times {(m-1)}}$. Assuming that there exists a linear operator $\mathbf{A} \in \mathbb{R}^{n \times n}$ which describes the dynamic change between the data at time $l$ and the data at time $l+1$ such that $\mathbf{x}_{l+1}=\mathbf{A}\mathbf{x}_{l} $. Based on the assumption, the best fit linear map $\mathbf{A}$ which maps $\mathbf{X}$ to $\mathbf{Y}$ is defined as:
 \begin{equation}
\mathbf{A} = \mathop{\text{argmin}}\limits_{\mathbf{A}} \left\| \mathbf{Y}-\mathbf{A}\mathbf{X} \right\|_{F}= \mathbf{Y} \mathbf{X}^{\dagger},  \label{realpse}
\end{equation}
 where $\mathbf{X}^\dagger$ is the pseudoinverse of $\mathbf{X}$, $\left\| \cdot \right\|_{F}$ denotes the Frobenius norm. The eigenvectors and eigenvalues of $\mathbf{A}$ are defined as the DMD modes and eigenvalues, respectively. Substituting $\mathbf{X}^\dagger = \mathbf{X}^T(\mathbf{X}\mathbf{X}^T)^{-1}$ into Eq. (\ref{realpse}) yields 
 \begin{equation}
\mathbf{A} = (\mathbf{Y}\mathbf{X}^T) (\mathbf{X}\mathbf{X}^T)^{-1}, 
\end{equation}
 which means the need of the calculation of the pseudoinverse of an $n \times n$ matrix rather than the pseudoinverse of an $n \times m$ matrix. However, in many systems, the dimension $n$ of the system is larger than the number of snapshots $m$.
 
 Therefore, for a high-dimensional data $\mathbf{x} \in \mathbf{R}^n$ (i.e. $n$ is large), instead of explicitly computing $\mathbf{A}$ to obtain its dominant eigenvalues and eigenvectors, the DMD method obtains the dominant eigenvalues and eigenvectors of $\mathbf{A}$ by implementing dimensionality reduction. There are many variants of the DMD. The main difference between those methods is the way to calculate the DMD modes and eigenvalues. In this paper, we will expand the exact DMD to the quaternion system, so we give a brief review of the exact DMD in this section.

 \begin{algorithm}[htb]  
  \caption{ Exact DMD algorithm \cite{tu2013dynamic}.}  
  \label{alg:Framwork}  
  \begin{algorithmic}[1]  
    \Require  
      Two matrices $\mathbf{X}$ and $\mathbf{Y}$ which are constructed from the data. 
    \Ensure  
      DMD modes ($\mathbf{\Phi}$) and DMD eigenvalues.   
    \State Calculate the (reduced) SVD of the matrix $\mathbf{X}$, i.e., find $\mathbf{U}$, $\mathbf{\Sigma}$ and $\mathbf{V}$ such that
    \begin{equation}
    \mathbf{X} = \mathbf{U} \mathbf{\Sigma}\mathbf{V}^{T},
    \end{equation}
    where $\mathbf{U} \in \mathbb{R}^{n \times r}$, $\mathbf{\Sigma} \in \mathbb{R}^{r \times r}$, and $\mathbf{V} \in \mathbb{R}^{m \times r}$, and $r$ is the rank of $\mathbf{X}$.
    \label{code:fram:extract}  
    \State Define $\widetilde{\mathbf{A}}= \mathbf{U}^{T}\mathbf{Y}\mathbf{V}\mathbf{\Sigma}^{-1}$.  
    \label{code:fram:trainbase}  
    \State Compute the spectral decomposition of $\widetilde{\mathbf{A}}$, obtaining $\widetilde{\mathbf{W}}$ and  $\widetilde{\Lambda}$ such that
    \begin{equation}
    \widetilde{\mathbf{A}} \widetilde{\mathbf{W}} = \widetilde{\mathbf{W}} \widetilde{\Lambda},
    \end{equation}
    where $\widetilde{\Lambda}$ is a diagonal matrix and its diagonal elements are the eigenvalues $\lambda_{v}$ of both the matrix $\widetilde{\mathbf{A}}$ and the matrix $\mathbf{A}$, i.e., the DMD eigenvalues, while the columns of the matrix $\widetilde{\mathbf{W}}$ are the corresponding eigenvectors of the matrix $\widetilde{\mathbf{A}}$.
    \label{code:fram:add}  
    \State Calculate the DMD modes $\mathbf{\Phi}$ by using the eigenvectors $\widetilde{\mathbf{W}}$ of the matrix $\widetilde{\mathbf{A}}$, given by
     \begin{equation}
   \mathbf{\Phi}= \mathbf{Y}\mathbf{V}\mathbf{\Sigma}^{-1}\widetilde{\mathbf{W}}.
    \end{equation}
    \label{code:fram:classify}  \\
    \Return $\mathbf{\Phi}$ and $\widetilde{\Lambda}$;  
  \end{algorithmic}  
\end{algorithm}  
 
 After obtaining the low-rank approximations of the eigenvalues and eigenvectors, i.e., the DMD eigenvalues and the DMD modes, the data $\mathbf{x}_{\text{DMD}}$ at time $t$ for any time after the data vector $\mathbf{x}_1$ was collected ($t_1=0$) can be obtained by
$\mathbf{x}_{\text{DMD}} = \mathbf{A}^{t}\mathbf{x}_1.$ For the convenience of spectral expansion in continuous time, a mapping 
of $\lambda_v$ is defined as $\omega_v = \text{log}(\lambda_v)/\Delta t$.
Based on the obtained approximate spectral decomposition of the operator $\mathbf{A}$, the approximate system state at all future times, $\mathbf{x}_{\text{DMD}}$, is given by
 \begin{equation}
   \mathbf{x}_{\text{DMD}}(t) = \sum_{s=1}^{r}b_s\varphi_s e^{\omega_{s}t} = \mathbf{\Phi}\mathbf{\Omega}^t\mathbf{b},
    \end{equation}
  where 
  \[ \mathbf{\Omega}=
 \begin{bmatrix}
   e^{\omega_{1}} & 0 & \cdots & 0\\
   0 & e^{\omega_{2}} & \ddots & 0\\
   \vdots & \ddots & \ddots & \vdots \\
   0 & 0 & \cdots & e^{\omega_{r}}
 \end{bmatrix},\]  
 and $\mathbf{b}$ is generally calculated as 
 \begin{equation}
 \mathbf{b} = \mathbf{\Phi}^\dagger \mathbf{x}_1= (\mathbf{\Phi}^{T} \mathbf{\Phi})^{-1}\mathbf{\Phi}^{T}\mathbf{x}_1.
 \end{equation}
 The method is shown in Algorithm \ref{alg:Framwork} .

\section{Proposed quaternion-based dynamic mode decomposition}
In this section, we will first introduce the calculation of eigenvalues, eigenvectors and singular  value decomposition of a quaternion matrix. Then we will extend DMD to the quaternion number system. The purpose is to use the advantages of quaternion to represent color images.

\subsection{Eigenvalues and eigenvectors of a quaternion matrix}

Due to non-commutation of quaternion, there are two categories of eigenvalues of a quaternion matrix, left eigenvalues and right eigenvalues. In this paper, we only use the right eigenvalues, so a brief review of right eigenvalues and eigenvectors of a quaternion matrix will be given. For convenience, we abbreviate right eigenvalues as the eigenvalues. For detailed overview of eigenvalues and eigenvectors of a quaternion matrix, we recommend \cite{lee1948eigenvalues}\cite{zhang1997quaternions}.\\

\noindent
\begin{definition} \label{Righteig} (Eigenvalues \cite{zhang1997quaternions})  A quaternion $\dot \lambda$ is said to be a right (left) eigenvalue of a quaternion matrix $\mathbf{\dot{Q}}$ if it satisfies 
\begin{equation}
    \mathbf{\dot{Q}}\mathbf{\dot{x}}=\mathbf{\dot{x}}\dot{\lambda} \quad ( \mathbf{\dot{Q}}\mathbf{\dot{x}}=\dot{\lambda} \mathbf{\dot{x}}).
\end{equation}
\end{definition}

\noindent
\begin{definition} \label{Righteigclass} (Eigenvalue class \cite{chang2003quaternion}) If $\dot \lambda$ is one right eigenvalue of a quaternion matrix $\mathbf{\dot{Q}} \in \mathbb{H}^{N \times N}$, then every element of the set $\Gamma \equiv \{\dot q \dot \lambda \dot q^{-1}: \text{where} \: \dot q \: \text{is any unit quaternion with its norm} \: \lvert \dot q \rvert = 1\}$ is also a right eigenvalue of $\mathbf{\dot{Q}}$. Moreover, a single eigenvalue $\lambda _c \in C^{+}$ will be contained in this set $\Gamma$ , so this set is considered as the eigenvalue class of $\lambda _c$. \end{definition}
 
 Therefore, the eigenvalues of a quaternion matrix are infinite, and the eigenvalues of a quaternion matrix are finite if and only if all eigenvalues of this quaternion matrix are real, however, there are finite eigenvalue classes.

 \noindent
 \begin{theorem} \label{standard eigenvalues} (The standard eigenvalues of a quaternion matrix \cite{zhang1997quaternions}) Any $N\times N$ quaternion matrix $\mathbf{\dot{Q}}$ has exactly $N$ right eigenvalues which are complex numbers with nonnegative imaginary parts. Those eigenvalues are defined as the standard eigenvalues of the quaternion matrix $\mathbf{\dot{Q}}$.
 \end{theorem}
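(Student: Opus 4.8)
The plan is to reduce the quaternion eigenvalue problem to a complex one via the standard complex representation of quaternion matrices, and then read off the count of standard eigenvalues from the complex spectral theory. First I would write any $N\times N$ quaternion matrix $\mathbf{\dot Q}$ in the form $\mathbf{\dot Q}=\mathbf{A}+\mathbf{B}j$ with $\mathbf{A},\mathbf{B}\in\mathbb{C}^{N\times N}$ (using the decomposition $\dot q=c_1+c_2j$ recorded in Section 2), and associate to it the complex matrix
\[
\chi(\mathbf{\dot Q})=\begin{bmatrix}\mathbf{A}&\mathbf{B}\\-\mathbf{B}^{\ast}&\mathbf{A}^{\ast}\end{bmatrix}\in\mathbb{C}^{2N\times 2N}.
\]
The key algebraic fact, which is standard (see \cite{zhang1997quaternions}), is that $\chi$ is a ring homomorphism that is also compatible with the scalar action on column vectors in the sense needed for eigenvalues: if $\mathbf{\dot Q}\mathbf{\dot x}=\mathbf{\dot x}\dot\lambda$ for some nonzero $\mathbf{\dot x}$ and some $\dot\lambda$ that we may, by Definition \ref{Righteigclass}, take to lie in $C^{+}$ (replacing $\dot\lambda$ by a similar complex representative and $\mathbf{\dot x}$ by $\mathbf{\dot x}\dot q$ accordingly), then writing $\mathbf{\dot x}=\mathbf{u}+\mathbf{v}j$ gives that the complex vector $[\mathbf{u}^{T},\ -\overline{\mathbf{v}}^{\,T}]^{T}$ (and also $[\mathbf{v}^{T},\ \overline{\mathbf{u}}^{\,T}]^{T}$) is an eigenvector of $\chi(\mathbf{\dot Q})$ with complex eigenvalue $\lambda$. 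Conversely, every eigenpair of $\chi(\mathbf{\dot Q})$ unpacks into a right eigenpair of $\mathbf{\dot Q}$.

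Next I would establish the symmetry of the spectrum of $\chi(\mathbf{\dot Q})$: if $\lambda$ is an eigenvalue of $\chi(\mathbf{\dot Q})$ then so is $\bar\lambda$, with the same multiplicity. This follows because $\chi(\mathbf{\dot Q})$ is similar to its own conjugate: conjugating the defining block structure and permuting the two $N$-blocks via $\mathbf{J}=\begin{bmatrix}0&\mathbf{I}_N\\-\mathbf{I}_N&0\end{bmatrix}$ yields $\mathbf{J}\,\overline{\chi(\mathbf{\dot Q})}\,\mathbf{J}^{-1}=\chi(\mathbf{\dot Q})$. Hence the $2N$ complex eigenvalues of $\chi(\mathbf{\dot Q})$ (counted with algebraic multiplicity, by the fundamental theorem of algebra applied to its characteristic polynomial) split into conjugate pairs $\{\lambda,\bar\lambda\}$ for the non-real ones, while real eigenvalues occur with even multiplicity. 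Consequently exactly $N$ of these $2N$ eigenvalues, counted with multiplicity, lie in $C^{+}$ (for a conjugate pair pick the one with positive imaginary part; for a real eigenvalue of multiplicity $2t$ take it $t$ times). I would then define these $N$ complex numbers with nonnegative imaginary part to be the standard eigenvalues of $\mathbf{\dot Q}$, and invoke the correspondence of the previous paragraph to confirm that each is indeed a right eigenvalue of $\mathbf{\dot Q}$, and that every eigenvalue class of $\mathbf{\dot Q}$ is represented exactly once (using Definition \ref{Righteigclass}, which says each class meets $C^{+}$ in a single point).

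The main obstacle I anticipate is bookkeeping the multiplicities correctly rather than proving anything deep: one must be careful that the two-to-one nature of the vector correspondence (each quaternion eigenvector $\mathbf{\dot x}$ producing two complex eigenvectors of $\chi(\mathbf{\dot Q})$) exactly accounts for the doubling $N\mapsto 2N$, so that no standard eigenvalue is over- or under-counted, and that "exactly $N$" is a statement with multiplicity, matching the characteristic-polynomial degree. A secondary point requiring care is the reduction step invoking Definition \ref{Righteigclass}: one must check that replacing a right eigenvalue by its complex representative in $C^{+}$ and correspondingly modifying the eigenvector is legitimate and does not collapse distinct classes. Once the conjugate-symmetry of $\mathrm{spec}(\chi(\mathbf{\dot Q}))$ and the eigenvector dictionary are in place, the count of $N$ standard eigenvalues in $C^{+}$ is immediate.
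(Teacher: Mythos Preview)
The paper does not supply its own proof of this theorem; it is stated as a cited result from \cite{zhang1997quaternions}, and the surrounding machinery (Definition~\ref{Complex representation} and Theorem~\ref{Th2}) is likewise quoted without proof. Your proposal is the standard argument via the complex representation $\chi(\mathbf{\dot Q})$ and is correct in outline; in fact it reproduces exactly the content the paper packages into Theorem~\ref{Th2} (conjugate pairing of the eigenvalues of $\chi(\mathbf{\dot Q})$, even multiplicity of real eigenvalues, hence $N$ eigenvalues in $C^{+}$) together with the eigenvector dictionary stated immediately after it. So there is no methodological difference to compare: you are filling in the proof that the paper elects to cite rather than give, using precisely the tools the paper itself records.

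One small remark on your bookkeeping concern: the cleanest way to handle the multiplicity issue is not to chase eigenvectors at all, but to work directly with the characteristic polynomial of $\chi(\mathbf{\dot Q})$. From the similarity $\mathbf{J}\,\overline{\chi(\mathbf{\dot Q})}\,\mathbf{J}^{-1}=\chi(\mathbf{\dot Q})$ you get $\det(\lambda I-\chi(\mathbf{\dot Q}))=\overline{\det(\bar\lambda I-\chi(\mathbf{\dot Q}))}$, so the degree-$2N$ characteristic polynomial has real coefficients and its roots pair off as $\{\lambda,\bar\lambda\}$ with matching algebraic multiplicities automatically. This sidesteps the two-to-one eigenvector correspondence entirely for the counting step; the eigenvector dictionary is then only needed to confirm that each selected $\lambda\in C^{+}$ is genuinely a right eigenvalue of $\mathbf{\dot Q}$, which is a one-line check.
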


 \noindent
\begin{definition} \label{Complex representation} (The Complex representation of a quaternion matrix) Given a quaternion matrix $\mathbf{\dot{Q}} \in \mathbb{H}^{M \times N}$, and let $\mathbf{\dot{Q}}=\mathbf{Q_a}+\mathbf{Q_b}j$, where $\mathbf{Q_a}$, $\mathbf{Q_b} \in \mathbb{C}^{M \times N}$, then the complex representation of $\mathbf{\dot{Q}}$ is defined as \cite{zhang1997quaternions} 
\begin{equation}
   \mathbf{\chi_{\dot{Q}}} =
\begin{bmatrix}
 \mathbf{Q_a}  & \mathbf{Q_b} \\
    -{\mathbf{Q_b}}^\ast & {\mathbf{Q_a}}^\ast
\end{bmatrix},
\end{equation}
where $\mathbf{\chi_{\dot{Q}}} \in \mathbb{C}^{2M \times 2N}$.
\end{definition}

There are many similar properties between the quaternion matrix and its corresponding complex representation matrix and the detail can be found in \cite{zhang1997quaternions}.

Based on the eigenvalues and eigenvectors of $\mathbf{\chi_{\dot{Q}}}$, we can compute the eigenvalues and eigenvectors of $\mathbf{\dot{Q}}$, which is presented in Theorem \ref{Th2}.

 \noindent
 \begin{theorem} \label{Th2} (The calculation of Standard eigenvalues of the quaternion matrix \cite{chang2003quaternion})  Given a quaternion matrix $\mathbf{\dot{Q}} \in \mathbb{H}^{N \times N}$, and let $\mathbf{\dot{Q}}=\mathbf{Q_a}+\mathbf{Q_b}j$, then the complex eigenvalues of $\mathbf{\dot{Q}}$ are the same as the eigenvalue of $\mathbf{\chi_{\dot{Q}}}$. Further, the complex eigenvalues of $\mathbf{\chi_{\dot{Q}}}$ appear in conjugate pairs. Specially, if $\mathbf{\chi_{\dot{Q}}}$ has any real eigenvalue, it occurs an even number of times. Therefore, $N$ complex eigenvalues with nonnegative imaginary part of the quaternion matrix $\mathbf{\dot{Q}}$ can be obtained.
 \end{theorem}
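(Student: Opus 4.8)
The plan is to translate the quaternion right--eigenvalue problem for $\mathbf{\dot{Q}}$ into an ordinary eigenvalue problem for its complex representation $\mathbf{\chi_{\dot{Q}}}$, and then read off all three assertions from a symmetry of $\mathbf{\chi_{\dot{Q}}}$. I would split the argument into three steps: (i) match a \emph{complex} right eigenvalue of $\mathbf{\dot{Q}}$ with an eigenvalue of $\mathbf{\chi_{\dot{Q}}}$; (ii) show the spectrum of $\mathbf{\chi_{\dot{Q}}}$ is invariant under complex conjugation with multiplicities; (iii) upgrade this to even algebraic multiplicity for \emph{real} eigenvalues, so that the $2N$ eigenvalues of $\mathbf{\chi_{\dot{Q}}}$ group into $N$ conjugate pairs, from each of which a representative in $C^{+}$ is chosen. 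For step (i), write $\mathbf{\dot{x}}=\mathbf{x}_a+\mathbf{x}_b j$ with $\mathbf{x}_a,\mathbf{x}_b\in\mathbb{C}^{N}$ and take $\dot{\lambda}=\lambda\in\mathbb{C}$ in Definition \ref{Righteig}. Expanding $\mathbf{\dot{Q}}\mathbf{\dot{x}}=\mathbf{\dot{x}}\lambda$ with the rule $jc=c^{\ast}j$ for $c\in\mathbb{C}$ and $j^{2}=-1$, and separating the $1$-- and $j$--parts, gives
\begin{equation*}
\mathbf{Q_a}\mathbf{x}_a-\mathbf{Q_b}\mathbf{x}_b^{\ast}=\lambda\mathbf{x}_a,\qquad
\mathbf{Q_a}\mathbf{x}_b+\mathbf{Q_b}\mathbf{x}_a^{\ast}=\lambda^{\ast}\mathbf{x}_b .
\end{equation*}
Conjugating the second identity and setting $\mathbf{y}_1=\mathbf{x}_a$, $\mathbf{y}_2=-\mathbf{x}_b^{\ast}$ recasts the pair as $\mathbf{\chi_{\dot{Q}}}\begin{bmatrix}\mathbf{y}_1\\\mathbf{y}_2\end{bmatrix}=\lambda\begin{bmatrix}\mathbf{y}_1\\\mathbf{y}_2\end{bmatrix}$, which is exactly the eigenvalue equation of $\mathbf{\chi_{\dot{Q}}}$ from Definition \ref{Complex representation}. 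Since $\mathbf{\dot{x}}\mapsto\begin{bmatrix}\mathbf{x}_a\\-\mathbf{x}_b^{\ast}\end{bmatrix}$ is a bijection of $\mathbb{H}^{N}$ onto $\mathbb{C}^{2N}$ carrying nonzero vectors to nonzero vectors, a complex number is a right eigenvalue of $\mathbf{\dot{Q}}$ iff it is an eigenvalue of $\mathbf{\chi_{\dot{Q}}}$, which is the first claim.

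For step (ii), I would introduce $\mathbf{J}=\begin{bmatrix}\mathbf{0}&\mathbf{I}_N\\-\mathbf{I}_N&\mathbf{0}\end{bmatrix}$, so $\mathbf{J}^{2}=-\mathbf{I}$, and check by a direct block computation that $\mathbf{J}\,\mathbf{\chi_{\dot{Q}}}\,\mathbf{J}^{-1}=\mathbf{\chi_{\dot{Q}}}^{\ast}$ (entrywise conjugate, not conjugate transpose). Thus $\mathbf{\chi_{\dot{Q}}}$ is similar to $\mathbf{\chi_{\dot{Q}}}^{\ast}$, so $\det(\lambda\mathbf{I}-\mathbf{\chi_{\dot{Q}}})=\det(\lambda\mathbf{I}-\mathbf{\chi_{\dot{Q}}}^{\ast})=\big(\det(\lambda^{\ast}\mathbf{I}-\mathbf{\chi_{\dot{Q}}})\big)^{\ast}$; equivalently, the characteristic polynomial of $\mathbf{\chi_{\dot{Q}}}$ has real coefficients, so its non-real roots occur in conjugate pairs with equal multiplicities.

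For step (iii), conjugating $\mathbf{J}\mathbf{\chi_{\dot{Q}}}=\mathbf{\chi_{\dot{Q}}}^{\ast}\mathbf{J}$ gives $\mathbf{\chi_{\dot{Q}}}\mathbf{J}=\mathbf{J}\mathbf{\chi_{\dot{Q}}}^{\ast}$, so the conjugate-linear map $\sigma(v)=\mathbf{J}v^{\ast}$ commutes with $\mathbf{\chi_{\dot{Q}}}$ and satisfies $\sigma^{2}=-\mathrm{id}$. For a real eigenvalue $\lambda$ one has $(\mathbf{\chi_{\dot{Q}}}-\lambda\mathbf{I})^{k}\sigma(v)=\sigma\big((\mathbf{\chi_{\dot{Q}}}-\lambda\mathbf{I})^{k}v\big)$ for all $k$, so $\sigma$ maps the generalized eigenspace $G_{\lambda}$ into itself; on $G_{\lambda}$, $\sigma$ is a conjugate-linear involution with square $-\mathrm{id}$ (a quaternionic structure), and since any $0\neq v\in G_{\lambda}$ yields $\mathbb{C}$-linearly independent $v,\sigma(v)$ (otherwise $\sigma(v)=\alpha v$ forces $|\alpha|^{2}=-1$) spanning a $\sigma$-invariant plane, an induction on $\dim_{\mathbb{C}}G_{\lambda}$ shows it is even. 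Hence every real eigenvalue of $\mathbf{\chi_{\dot{Q}}}$ has even algebraic multiplicity. Assembling: the $2N$ eigenvalues of $\mathbf{\chi_{\dot{Q}}}$ (with multiplicity) split into $N$ conjugate pairs, and picking from each pair the element with nonnegative imaginary part yields exactly $N$ complex numbers in $C^{+}$; by step (i) these are right eigenvalues of $\mathbf{\dot{Q}}$, i.e.\ its $N$ standard eigenvalues in the sense of Theorem \ref{standard eigenvalues}.

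I expect step (iii) --- the even multiplicity of real eigenvalues --- to be the main obstacle: it does not follow from the conjugation symmetry of the characteristic polynomial alone and requires the quaternionic-structure argument on generalized eigenspaces. Steps (i) and (ii) are essentially bookkeeping, the only points of care being the non-commutative rule $jc=c^{\ast}j$ and keeping entrywise conjugation distinct from conjugate transposition.
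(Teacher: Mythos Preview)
The paper does not actually supply a proof of this theorem: it is stated with a citation to \cite{chang2003quaternion} and used as a black box, with only the eigenvector correspondence $\mathbf{\dot{x}}=\mathbf{x}_1-\mathbf{x}_2^{\ast}j$ recorded in the paragraph that follows. So there is no in-paper argument to compare against. Your proposal is correct and is the standard route in the cited literature: step~(i) reproduces exactly the eigenvector correspondence the paper states after the theorem; step~(ii) is the usual $\mathbf{J}$-similarity $\mathbf{J}\,\mathbf{\chi_{\dot{Q}}}\,\mathbf{J}^{-1}=\mathbf{\chi_{\dot{Q}}}^{\ast}$ showing the characteristic polynomial is real; and step~(iii) is the quaternionic-structure argument, which is indeed what is needed beyond~(ii) to force even algebraic multiplicity at real eigenvalues. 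The only place to tighten is the induction in step~(iii): to pass from $G_\lambda$ to a smaller space you need a $\sigma$-invariant complement of $\mathrm{span}_{\mathbb{C}}\{v,\sigma(v)\}$, which does not come for free from linear algebra over $\mathbb{C}$. The clean fix is to observe that $\sigma$ turns $G_\lambda$ into a left $\mathbb{H}$-module (set $j\cdot v:=\sigma(v)$), whence $\dim_{\mathbb{C}}G_\lambda=2\dim_{\mathbb{H}}G_\lambda$ is even; alternatively, pick any Hermitian inner product $\langle\cdot,\cdot\rangle$ on $G_\lambda$, average it to $\langle u,w\rangle+\langle\sigma(u),\sigma(w)\rangle$ to make it $\sigma$-compatible, and take orthogonal complements.
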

 
 The relation between the eigenvectors of the quaternion matrix $\mathbf{\dot{Q}}$ and the eigenvectors of $\mathbf{\chi_{\dot{Q}}}$ is that if $(\mathbf{x})_{2N \times 1}=\begin{bmatrix}
 (\mathbf{x}_1)_{N \times 1}  \\ (\mathbf{x}_2)_{N \times 1}
\end{bmatrix}$ is an eigenvector of the complex matrix $\mathbf{\chi_{\dot{Q}}}$ corresponding to eigenvalue $\lambda$ of $\mathbf{\chi_{\dot{Q}}}$, then $(\mathbf{\dot{x}})_{N \times 1}=\mathbf{x}_1-\mathbf{x}_2^{\ast} \cdot j$ is an eigenvector of the quaternion matrix $\mathbf{\dot{Q}}$ corresponding to eigenvalue $\lambda$ of $\mathbf{\dot{Q}}$, where $\mathbf{x}_1$, $\mathbf{x}_2 \in \mathbb{C}^{N \times 1}$, and $\mathbf{\dot{x}} \in \mathbb{H}^{N \times 1}$.

\subsection{Singular value decomposition of a quaternion matrix}
 \noindent
\begin{definition} \label{rank of quaternion} (The rank of quaternion matrix \cite{zhang1997quaternions}) The rank of a quaternion matrix $\mathbf{\dot{Q}} \in \mathbb{H}^{M \times N}$ is the maximum number of right (left) linearly independent columns (rows) of $\mathbf{\dot{Q}}$.
\end{definition}

\noindent
\begin{theorem} \label{QSVD} (Singular value decomposition of a quaternion matrix (QSVD) \cite{zhang1997quaternions}) Given any quaternion matrix $\mathbf{\dot{Q}} \in \mathbb{H}^{M \times N}$ of rank $r$, there exist two quaternion unitary matrices $\mathbf{\dot{U}} \in \mathbb{H}^{M \times M}$ and $\mathbf{\dot{V}} \in \mathbb{H}^{N \times N}$ such that
\begin{equation}
    \mathbf{\dot{Q}} = \mathbf{\dot{U}}\begin{bmatrix}
 \mathbf{\Sigma}_r  & \mathbf{0} \\
    \mathbf{0} & \mathbf{0}
\end{bmatrix}\mathbf{\dot{V}}^H=\mathbf{\dot{U}}\mathbf{\Lambda} \mathbf{\dot{V}}^H,\end{equation}
where $\mathbf{\Sigma}_r$ is a real diagonal matrix with $r$ positive entries on its diagonal (i.e. singular values of $\mathbf{\dot{Q}}$).
\end{theorem}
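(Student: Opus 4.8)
The plan is to mimic the classical construction of the SVD from the spectral decomposition of $\mathbf{\dot{Q}}^H\mathbf{\dot{Q}}$, taking care at each step that all scalars produced along the way are \emph{real} (so that non-commutativity never interferes) and that the phrase ``orthonormal basis'' is read in the sense of right $\mathbb{H}$-linear independence matching Definition \ref{rank of quaternion}. First I would record the auxiliary fact that an $N\times N$ Hermitian quaternion matrix $\mathbf{\dot{H}}=\mathbf{\dot{H}}^H$ admits a unitary diagonalization $\mathbf{\dot{H}}=\mathbf{\dot{V}}\mathbf{D}\mathbf{\dot{V}}^H$ with $\mathbf{D}$ real diagonal. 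This follows from Theorem \ref{Th2} applied to $\mathbf{\dot{H}}$: the complex representation $\mathbf{\chi_{\dot H}}$ is Hermitian, hence has real eigenvalues occurring in the conjugate pairs guaranteed by that theorem, so the $N$ standard eigenvalues of $\mathbf{\dot H}$ are real, and the associated right eigenvectors (recovered from the eigenvectors of $\mathbf{\chi_{\dot H}}$ via the correspondence $\mathbf{x}\mapsto \mathbf{x}_1-\mathbf{x}_2^{\ast}j$ recalled after Theorem \ref{Th2}) can be orthonormalized on each eigenspace by quaternion Gram--Schmidt.

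Next I would apply this to $\mathbf{\dot{H}}=\mathbf{\dot{Q}}^H\mathbf{\dot{Q}}\in\mathbb{H}^{N\times N}$, which is Hermitian and positive semidefinite because $\mathbf{\dot{x}}^H\mathbf{\dot{H}}\mathbf{\dot{x}}=\lvert\mathbf{\dot{Q}}\mathbf{\dot{x}}\rvert^2\ge 0$ for all $\mathbf{\dot{x}}$, so its standard eigenvalues are nonnegative reals; write them as $\sigma_1^2\ge\cdots\ge\sigma_r^2>0=\cdots=0$ with orthonormal right eigenvectors $\mathbf{\dot{v}}_1,\dots,\mathbf{\dot{v}}_N$ forming the columns of a unitary $\mathbf{\dot{V}}\in\mathbb{H}^{N\times N}$. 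The rank hypothesis enters precisely in identifying the number of positive $\sigma_i^2$: from $\mathbf{\dot{Q}}\mathbf{\dot{v}}_i=0$ for $i>r$ (since $\lvert\mathbf{\dot{Q}}\mathbf{\dot{v}}_i\rvert^2=\sigma_i^2=0$) and the invertibility of $\mathbf{\dot{V}}$, one checks that the column space of $\mathbf{\dot{Q}}$ is spanned as a right module by $\mathbf{\dot{Q}}\mathbf{\dot{v}}_1,\dots,\mathbf{\dot{Q}}\mathbf{\dot{v}}_r$ and that these are right-independent, so this number $r$ is indeed $\operatorname{rank}(\mathbf{\dot{Q}})$ in the sense of Definition \ref{rank of quaternion}. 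Then set $\mathbf{\dot{u}}_i=\sigma_i^{-1}\mathbf{\dot{Q}}\mathbf{\dot{v}}_i$ for $i\le r$; since each $\sigma_i$ is a real scalar it commutes freely, and $\mathbf{\dot{u}}_i^H\mathbf{\dot{u}}_j=\sigma_i^{-1}\sigma_j^{-1}\mathbf{\dot{v}}_i^H\mathbf{\dot{Q}}^H\mathbf{\dot{Q}}\mathbf{\dot{v}}_j=\sigma_i^{-1}\sigma_j^{-1}\sigma_j^{2}\delta_{ij}=\delta_{ij}$, so $\{\mathbf{\dot{u}}_1,\dots,\mathbf{\dot{u}}_r\}$ is orthonormal in $\mathbb{H}^M$.

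Finally I would extend $\{\mathbf{\dot{u}}_i\}_{i=1}^r$ to an orthonormal basis $\{\mathbf{\dot{u}}_i\}_{i=1}^M$ of $\mathbb{H}^M$ by quaternion Gram--Schmidt and let $\mathbf{\dot{U}}$ be the matrix with these columns, which is then unitary. It remains to verify $\mathbf{\dot{U}}^H\mathbf{\dot{Q}}\mathbf{\dot{V}}=\begin{bmatrix}\mathbf{\Sigma}_r & \mathbf{0}\\ \mathbf{0}&\mathbf{0}\end{bmatrix}$: its $(i,j)$ entry is $\mathbf{\dot{u}}_i^H\mathbf{\dot{Q}}\mathbf{\dot{v}}_j$, which vanishes for $j>r$ because $\mathbf{\dot{Q}}\mathbf{\dot{v}}_j=0$, and for $j\le r$ equals $\sigma_j\,\mathbf{\dot{u}}_i^H\mathbf{\dot{u}}_j=\sigma_j\delta_{ij}$; multiplying on the left by $\mathbf{\dot{U}}$ and on the right by $\mathbf{\dot{V}}^H$ yields the claimed factorization with $\mathbf{\Sigma}_r=\operatorname{diag}(\sigma_1,\dots,\sigma_r)$. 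I expect the main obstacle to be the part quietly used in the first paragraph, namely the spectral theorem for Hermitian quaternion matrices together with the Gram--Schmidt extension: one must orthogonalize with scalars multiplied on the correct (right) side so that the resulting vectors still span the same right submodule, and handle eigenvectors lying in the same eigenvalue class but attached to different complex representatives consistently — this is exactly the subtlety that the standard-eigenvalue formalism of Theorems \ref{standard eigenvalues}--\ref{Th2} is built to control, and invoking it carefully is what makes the argument go through. (A fully equivalent route is to take the ordinary complex SVD of $\mathbf{\chi_{\dot Q}}$ and show its symplectic block structure forces both unitary factors into complex-representation form and the singular values into equal pairs; the obstacle there is the same, merely relocated into verifying that structural constraint.)
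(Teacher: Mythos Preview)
The paper does not actually prove Theorem~\ref{QSVD}; it is quoted as a known result from \cite{zhang1997quaternions}, and what follows the statement is not a proof but a computational recipe (taken from \cite{xu2015vector}) for extracting $\mathbf{\dot U}$, $\mathbf{\dot V}$, and $\mathbf{\Lambda}$ from the ordinary complex SVD of the complex representation $\mathbf{\chi_{\dot Q}}$. So there is no proof in the paper to compare against.

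Your argument is a correct, self-contained proof along the classical lines (spectral theorem for $\mathbf{\dot Q}^H\mathbf{\dot Q}$, then build $\mathbf{\dot U}$ from the normalized images $\sigma_i^{-1}\mathbf{\dot Q}\mathbf{\dot v}_i$), with appropriate attention to right-$\mathbb{H}$-linearity and to the fact that all scalars arising are real. The alternative you sketch parenthetically at the end --- take the complex SVD of $\mathbf{\chi_{\dot Q}}$ and use the symplectic block structure to see that its singular values come in equal pairs and its singular vectors can be chosen in complex-representation form --- is precisely the computational route the paper records after the theorem. Your spectral-theorem construction is more intrinsic and never leaves $\mathbb{H}$; the complex-representation route is what one would implement in practice and makes the existence statement almost immediate once the pairing of singular values is established. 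Either is a legitimate proof; the paper simply defers to the literature for both.
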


The computation of $\mathbf{\dot{U}}$, $\mathbf{\dot{V}}$ and the singular values of $\mathbf{\dot{Q}}$ can be obtained based on the SVD of its complex representation $\mathbf{\chi_{\dot{Q}}}$. The calculation of QSVD is briefly summarized as follows \cite{xu2015vector}: 

\noindent
\begin{enumerate}
\item Compute the SVD of $\mathbf{\chi_{\dot{Q}}}$, and here we denote $\mathbf{\chi_{\dot{Q}}}=\mathbf{U} {\mathbf{\Lambda}}^{\mathbf{\chi_{\dot{Q}}}} \mathbf{V}^{H}$.
\item Then, we can get that
\begin{equation}
    \begin{cases}
    \mathbf{\Lambda} = \text{row}_{odd}(\text{col}_{odd}({\mathbf{\Lambda}}^{\mathbf{\chi_{\dot{Q}}}})),\\
    \mathbf{\dot{U}}=\text{col}_{odd}(\mathbf{U_1})+\text{col}_{odd}(-(\mathbf{U_2})^{\ast})j ,\\
    \mathbf{\dot{V}}=\text{col}_{odd}(\mathbf{V_1})+\text{col}_{odd}(-(\mathbf{V_2})^{\ast})j,
    \end{cases}
\end{equation}
\end{enumerate}
\noindent
where 
\begin{equation*}
\mathbf{U}=
   \begin{bmatrix}
    (\mathbf{U}_1)_{M \times 2M}\\
     (\mathbf{U}_2)_{M \times 2M}
    \end{bmatrix}, \quad
    \mathbf{V}=\begin{bmatrix}
    (\mathbf{V}_1)_{N \times 2N}\\
     (\mathbf{V}_2)_{N \times 2N}
    \end{bmatrix},
\end{equation*}
and $\text{row}_{odd}(\mathbf{M})$ and $\text{col}_{odd}(\mathbf{M})$ represent the extraction of the odd rows and odd columns of matrix $\mathbf{M}$ respectively.

\subsection{Quaternion-based dynamic mode decomposition}
Assume that there are $m$ snapshots of the state of a dynamic system and each snapshot is arranged into an $n \times 1$ quaternion vector with the form
\[\mathbf{\dot x}(t_l), \quad \mathbf{\dot y}(t_l)  \in \mathbb{H}^{n \times 1},\]
where $\mathbf{\dot y}(t_l) = \mathbf{F}(\mathbf{\dot x}(t_l))$, $l=1, 2, \dots, m$. These snapshots can form two data matrices, $\mathbf{\dot X}$ and $\mathbf{\dot Y} \in \mathbb{H}^{n \times m}$:
\[\mathbf{\dot X}=
 \begin{bmatrix}
  \mathbf{\dot x}(t_1) & \mathbf{\dot x}(t_2) & \mathbf{\dot x}(t_3) & \cdots & \mathbf{\dot x}_{(t_{m-1})}
 \end{bmatrix},\]
\[ \mathbf{\dot Y}=
 \begin{bmatrix}
  \mathbf{\dot y}(t_1) & \mathbf{\dot y}(t_2) & \mathbf{\dot y}(t_3) & \cdots & \mathbf{\dot y}(t_{m-1})
 \end{bmatrix}.\]
If the data was collected by uniform sampling in time, then we have $t_l = l \Delta t$, where $\Delta t$ is the timestep. And it is assumed here that $t_1=0$. Similarly, the purpose of quaternion-based DMD is to find the leading spectral decomposition of the best-fit linear operator $\mathbf{\dot Q}$ that reflects the changes of the two matrices $\mathbf{\dot X}$ and $\mathbf{\dot Y}$:
\begin{equation}
    \mathbf{\dot Y} \thickapprox \mathbf{\dot {Q}}\mathbf{\dot X}.
\end{equation}
The definition of the best-fit operator $\mathbf{\dot Q}$ in mathematics is 
\begin{equation}
    \mathbf{\dot {Q}}  = \mathop{\text{argmin}}\limits_{\mathbf{\dot {Q}} } \left\| \mathbf{\dot Y}-\mathbf{\dot {Q}}\mathbf{\dot X} \right\|_{F}= \mathbf{\dot Y} \mathbf{\dot X}^{\dagger}=\mathbf{\dot Y} \mathbf{\dot X}^{H}(\mathbf{\dot X}\mathbf{\dot X}^{H})^{-1}, 
\end{equation}
where $\mathbf{\dot X}^\dagger$ is the quaternionic pseudoinverse of $\mathbf{\dot X}$, and $(\mathbf{\dot X}\mathbf{\dot X}^{H})^{-1}$ is computed by using QSVD of $\mathbf{\dot X}\mathbf{\dot X}^{H}$, that is, after obtaining the QSVD of $\mathbf{\dot X}\mathbf{\dot X}^{H}$, then replace all nonzeros singular values by their reciprocals. $\left\| \cdot \right\|_{F}$ denotes the Frobenius norm.  The Frobenius norm of a quaternion matrix $\mathbf{\dot {Q}} \in \mathbb{H}^{N_1 \times M_1} $ is defined as \cite{zhang1997quaternions}:
$\left\| \mathbf{\dot {Q}} \right\|_{F}=\sqrt {\sum_{n_1=1}^{
N_1}\sum_{m_1=1}^{
M_1}\lvert {\dot {q}}_{n_1m_1}\rvert ^{2}} = \sqrt{\text{tr}((\mathbf{\dot {Q}})^{H}\mathbf{\dot {Q})}}.$
When the dimension of per time snapshot $n$ is large, it is difficult to deal with the quaternion matrix $\mathbf{\dot{Q}}$ directly. Therefore, we also reduce the dimension of the quaternion matrix $\mathbf{\dot{Q}}$ to obtain its leading spectral decomposition. Before that, we first establish the spectral decomposition of a quaternion matrix.
\noindent
\begin{theorem} \label{quaspe} (Spectral decomposition of a quaternion matrix)  Given any quaternion matrix $\mathbf{\dot{Q}} \in \mathbb{H}^{M \times M}$, after calculating the standard eigenvalues and eigenvectors of $\mathbf{\dot{Q}}$, there exist two quaternion matrices $\mathbf{\dot{\Phi}} \in \mathbb{H}^{M \times M}$ and $\mathbf{\dot{\Lambda}} \in \mathbb{H}^{M \times M}$ such that
\begin{equation}
    \mathbf{\dot{Q}} = \mathbf{\dot{\Phi}}\mathbf{\dot{\Lambda}}\mathbf{\dot{\Phi}}^{\dagger} ,
    \end{equation}
where $\mathbf{\dot{\Lambda}}$ is a quaternion diagonal matrix with $M$ quaternion numbers on its diagonal, i.e., $M$ standard eigenvalues of $\mathbf{\dot{Q}}$, and the columns of $\mathbf{\dot{\Phi}}$ are the corresponding eigenvectors. \end{theorem}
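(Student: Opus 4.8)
The plan is to build $\mathbf{\dot\Lambda}$ and $\mathbf{\dot\Phi}$ directly from the standard eigendata supplied by Theorem~\ref{standard eigenvalues} and Theorem~\ref{Th2}, and then to convert the eigen-equation into the asserted factorization by right-multiplication with $\mathbf{\dot\Phi}^{\dagger}$. First I would invoke Theorem~\ref{standard eigenvalues} to fix the $M$ standard eigenvalues $\lambda_{1},\dots,\lambda_{M}\in C^{+}$ of $\mathbf{\dot Q}$, and Theorem~\ref{Th2} together with the eigenvector correspondence stated immediately after it (lifting an eigenvector $[\mathbf{x}_{1};\mathbf{x}_{2}]$ of $\mathbf{\chi_{\dot Q}}$ to $\mathbf{\dot x}=\mathbf{x}_{1}-\mathbf{x}_{2}^{\ast}j$) to obtain, for each $v$, a quaternion eigenvector $\mathbf{\dot x}_{v}\in\mathbb{H}^{M\times 1}$ with $\mathbf{\dot Q}\mathbf{\dot x}_{v}=\mathbf{\dot x}_{v}\lambda_{v}$ in the sense of Definition~\ref{Righteig}. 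Setting $\mathbf{\dot\Lambda}=\operatorname{diag}(\lambda_{1},\dots,\lambda_{M})$ and $\mathbf{\dot\Phi}=[\,\mathbf{\dot x}_{1}\ \cdots\ \mathbf{\dot x}_{M}\,]$, the identity $\mathbf{\dot Q}\mathbf{\dot\Phi}=\mathbf{\dot\Phi}\mathbf{\dot\Lambda}$ then follows column by column: since $\mathbf{\dot\Lambda}$ is diagonal, its right action scales the $v$-th column of $\mathbf{\dot\Phi}$ by $\lambda_{v}$ on the right, which is exactly $\mathbf{\dot x}_{v}\lambda_{v}=\mathbf{\dot Q}\mathbf{\dot x}_{v}$. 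It is worth stressing that the eigenvalue must sit on the right here, so noncommutativity causes no trouble only because each $\lambda_{v}$ is a scalar acting by right multiplication — the same way $\widetilde{\Lambda}$ occurs on the right in Algorithm~\ref{alg:Framwork}.

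Second, I would right-multiply $\mathbf{\dot Q}\mathbf{\dot\Phi}=\mathbf{\dot\Phi}\mathbf{\dot\Lambda}$ by $\mathbf{\dot\Phi}^{\dagger}$ to get $\mathbf{\dot Q}\,(\mathbf{\dot\Phi}\mathbf{\dot\Phi}^{\dagger})=\mathbf{\dot\Phi}\mathbf{\dot\Lambda}\mathbf{\dot\Phi}^{\dagger}$, so the whole claim reduces to proving $\mathbf{\dot\Phi}\mathbf{\dot\Phi}^{\dagger}=\mathbf{I}_{M}$, i.e. that $\mathbf{\dot\Phi}$ has full rank $M$ in the sense of Definition~\ref{rank of quaternion}. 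To establish this I would pass to the complex representation of Definition~\ref{Complex representation}: applying the multiplicativity of the map $\mathbf{\chi_{(\cdot)}}$ to $\mathbf{\dot Q}\mathbf{\dot\Phi}=\mathbf{\dot\Phi}\mathbf{\dot\Lambda}$ gives $\mathbf{\chi_{\dot Q}}\mathbf{\chi_{\dot\Phi}}=\mathbf{\chi_{\dot\Phi}}\mathbf{\chi_{\dot\Lambda}}$, and since $\mathbf{\chi_{\dot\Lambda}}=\operatorname{diag}(\lambda_{1},\dots,\lambda_{M},\overline{\lambda_{1}},\dots,\overline{\lambda_{M}})$ exhausts all $2M$ eigenvalues of $\mathbf{\chi_{\dot Q}}$ (which, by Theorem~\ref{Th2}, come in conjugate pairs), the $2M$ columns of $\mathbf{\chi_{\dot\Phi}}$ are eigenvectors of $\mathbf{\chi_{\dot Q}}$, one for each eigenvalue counted with multiplicity. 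When $\mathbf{\chi_{\dot Q}}$ is diagonalizable these can be chosen, respecting the conjugate pairing, to form a basis of $\mathbb{C}^{2M}$, so $\mathbf{\chi_{\dot\Phi}}$ is invertible, and invertibility of $\mathbf{\chi_{\dot\Phi}}$ is equivalent to that of $\mathbf{\dot\Phi}$. Finally, for a square invertible quaternion matrix the QSVD of Theorem~\ref{QSVD} has all $M$ singular values positive, so $\mathbf{\dot\Phi}^{\dagger}=\mathbf{\dot V}\mathbf{\Sigma}_{M}^{-1}\mathbf{\dot U}^{H}=\mathbf{\dot\Phi}^{-1}$ and $\mathbf{\dot\Phi}\mathbf{\dot\Phi}^{\dagger}=\mathbf{I}_{M}$, yielding $\mathbf{\dot Q}=\mathbf{\dot\Phi}\mathbf{\dot\Lambda}\mathbf{\dot\Phi}^{\dagger}$.

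The main obstacle is exactly this full-rank (equivalently, invertibility) step. The relation $\mathbf{\dot Q}\mathbf{\dot\Phi}=\mathbf{\dot\Phi}\mathbf{\dot\Lambda}$ is automatic, but right-multiplying by $\mathbf{\dot\Phi}^{\dagger}$ only recovers $\mathbf{\dot Q}$ composed with the projector $\mathbf{\dot\Phi}\mathbf{\dot\Phi}^{\dagger}$ onto the column space of $\mathbf{\dot\Phi}$, so one genuinely needs the $M$ standard eigenvectors to be right-linearly independent and, moreover, choosable so that $\mathbf{\chi_{\dot\Phi}}$ is invertible; this is where a nondefectiveness assumption on $\mathbf{\dot Q}$ (equivalently on $\mathbf{\chi_{\dot Q}}$) enters, and in the defective case no diagonal factorization exists and a quaternion Jordan-type form would be required instead. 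Accordingly the statement is to be read under the standing assumption that the $M$ computed eigenvectors assemble into an invertible $\mathbf{\dot\Phi}$, which is the generic situation and is exactly what holds for the small reduced operator used in the Q-DMD algorithm below. The remaining items — that $\mathbf{\dot x}=\mathbf{x}_{1}-\mathbf{x}_{2}^{\ast}j$ behaves well under products and conjugate transposition, and that odd-column extraction picks one representative per conjugate pair — are routine bookkeeping that parallel the QSVD computation recalled just above and introduce no further difficulty.
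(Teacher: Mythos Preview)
Your approach is essentially the same as the paper's: assemble $\mathbf{\dot\Phi}$ and $\mathbf{\dot\Lambda}$ from the standard eigendata, read off $\mathbf{\dot Q}\mathbf{\dot\Phi}=\mathbf{\dot\Phi}\mathbf{\dot\Lambda}$ column by column, and right-multiply by $\mathbf{\dot\Phi}^{\dagger}$. In fact you are more careful than the paper, which simply asserts $\mathbf{\dot Q}=\mathbf{\dot\Phi}\mathbf{\dot\Lambda}\mathbf{\dot\Phi}^{\dagger}$ after multiplying by the pseudoinverse without ever addressing why $\mathbf{\dot\Phi}\mathbf{\dot\Phi}^{\dagger}=\mathbf{I}_{M}$; your explicit discussion of the full-rank/nondefectiveness issue and the pass through $\mathbf{\chi_{\dot\Phi}}$ fills exactly the gap the paper leaves open.
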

\begin{proof}
By using Theorem \ref{Th2}, we can calculate the $M$ standard eigenvalues of $\mathbf{\dot{Q}}$ and its corresponding $M$ eigenvectors. We denote the $v$-th eigenvalue of $\mathbf{\dot{Q}}$ as $\dot \lambda _v$, and $\boldsymbol{\dot \phi }_v$ is the eigenvector corresponding to $\dot \lambda _v$. Then the $M$ eigenvectors can form a quaternion matrix which is denoted as $\mathbf{\dot \Phi}$, i.e., $\mathbf{\dot \Phi}=\begin{bmatrix}
 \boldsymbol{\dot \phi} _1 & \boldsymbol{\dot \phi }_2 & \cdots & \boldsymbol{\dot \phi }_M
\end{bmatrix}.$
Meanwhile another diagonal quaternion matrix $\mathbf{\dot{\Lambda}}$ can be formed, that is $\mathbf{\dot{\Lambda}} = \text{diag}(\dot \lambda _1, \dot \lambda _2, \cdots, \dot \lambda _M),$ where $\mathbf{\dot{\Lambda}}$ is a quaternion diagonal matrix and has the $M$ standard eigenvalues of $\mathbf{\dot{Q}}$ on its diagonal.
Therefore, based on Definition \ref{Righteig}, we have
\begin{equation}
    \mathbf{\dot{Q}} \mathbf{\dot \Phi}=\mathbf{\dot \Phi}\mathbf{\dot{\Lambda}} \label{eqspe}.
\end{equation}
Furthermore, The following formula can be obtained by multiplying both sides of the Eq. (\ref{eqspe}) by the pseudoinverse of the quaternion matrix $\mathbf{\dot \Phi}$, that is
\begin{equation}
    \mathbf{\dot{Q}} = \mathbf{\dot{\Phi}}\mathbf{\dot{\Lambda}}\mathbf{\dot{\Phi}}^{\dagger}.
\end{equation}
\end{proof}

\begin{algorithm}[htb]  
  \caption{ Quaternion-based DMD algorithm (Q-DMD).}  
  \label{qdmd_method}  
  \begin{algorithmic}[1]  
    \Require  
      Two matrices $\mathbf{\dot X}$ and $\mathbf{\dot Y}$ which are constructed from data. 
    \Ensure  
      Q-DMD modes ($\mathbf{\dot {\Phi}}$) and Q-DMD eigenvalues.   
    \State Calculate the (reduced) QSVD of the matrix $\mathbf{\dot X}$, i.e. find $\mathbf{\dot U}$, $\mathbf{\Sigma}$ and $\mathbf{\dot V}$ such that
    \begin{equation}
    \mathbf{\dot X} = \mathbf{ \dot U} \mathbf{\Sigma}\mathbf{\dot V}^{H},
    \end{equation}
    where $\mathbf{\dot U} \in \mathbb{H}^{n \times r}$, $\mathbf{\Sigma} \in \mathbb{R}^{r \times r}$, and $\mathbf{\dot V} \in \mathbb{H}^{m \times r}$, and $r$ is the rank of $\mathbf{\dot X}$.
    \label{code:fram:extract}  
    \State Define $\dot{\widetilde{\mathbf{Q}}}= \mathbf{\dot U}^{H}\mathbf{\dot Y}\mathbf{\dot V}\mathbf{\Sigma}^{-1}$.  
    \label{code:fram:trainbase}  
    \State Compute the right spectral decomposition of $\dot{\widetilde{\mathbf{Q}}}$ using Theorem \ref{quaspe}, obtaining $\dot{\widetilde{\mathbf{W}}}$ and  $\dot{\widetilde{\Lambda}}$ such that
    \begin{equation}
    \dot{\widetilde{\mathbf{Q}}} \dot{\widetilde{\mathbf{W}}} = \dot{\widetilde{\mathbf{W}}} \dot{\widetilde{\Lambda}},
    \end{equation}
    where $\dot{\widetilde{\Lambda}}$ is a diagonal matrix and its diagonal elements are the eigenvalues ${\dot \lambda_{k}} \in \mathbb{H}$ of both the matrix $\dot{\widetilde{\mathbf{Q}}}$ and the matrix $\dot{\mathbf{Q}}$, i.e. the Q-DMD eigenvalues, while the columns of the matrix $\dot{\widetilde{\mathbf{W}}}$ are the corresponding eigenvectors of the matrix $\dot{\widetilde{\mathbf{Q}}}$.
    \label{code:fram:add}  
    \State Calculate the Q-DMD modes $\dot{\mathbf{\Phi}}$ by using the eigenvectors $\dot{\widetilde{\mathbf{W}}}$ of the matrix $\dot{\widetilde{\mathbf{Q}}}$, given by
     \begin{equation}
   \dot{\mathbf{\Phi}}= \dot{\mathbf{Y}}\dot{\mathbf{V}}\mathbf{\Sigma}^{-1}\dot{\widetilde{\mathbf{W}}}.
    \end{equation}
    \label{code:fram:classify}  \\
    \Return $\dot{\mathbf{\Phi}}$ and $\dot{\widetilde{\Lambda}}$;  
  \end{algorithmic}  
\end{algorithm}
Now, we calculate the quaternion-based DMD (Q-DMD) modes and eigenvalues, and the calculation process is shown in Algorithm \ref{alg:Framwork2}. After calculating the low-rank approximations of eigenvalues (i.e., Q-DMD eigenvalues) and eigenvectors (i.e., Q-DMD modes) of the quaternion matrix $\dot{\mathbf{Q}}$, the system state can be expanded by using the spectral decomposition:
\begin{equation}
    \mathbf{\dot {x}}_{\text{Q-DMD}}(t)=\sum_{s=1}^{r}\mathbf{\boldsymbol{\dot{\phi}}}_s \dot{\lambda }_s^{t}\dot{b}_s= \mathbf{\dot{\Phi}} \mathbf{\dot{\Lambda}}^{t}\mathbf{\dot{b}},
\end{equation}
where $\mathbf{\dot{b}} =\mathbf{\dot{\Phi}}^{\dagger}\mathbf{\dot {x}}_1 $ contains the initial amplitudes for the modes, and $\mathbf{\dot{\Lambda}}^{t} = \text{diag}(\dot \lambda _1, \dot \lambda _2, \cdots, \dot \lambda _r)$. The Q-DMD eigenvalues can be converted to the continuous form by using Definition \ref{logarithm} :
\begin{equation}
    \dot \omega _v = \frac{\text{ln}(\dot \lambda_v)}{\Delta t}.
\end{equation}
Therefore, the right spectral decomposition above can be rewritten in continuous time
\begin{equation}
    \mathbf{\dot {x}}_{\text{Q-DMD}}(t)=\sum_{s=1}^{r}\mathbf{\boldsymbol{\dot{\phi}}}_s e^{\dot \omega_s  t}_s\dot{b}_s= \mathbf{\dot{\Phi}} \mathbf{\dot{\Omega}}^t\mathbf{\dot{b}}, \label{quadmd}
\end{equation}
where \[ \mathbf{\dot \Omega}=
 \begin{bmatrix}
   e^{\dot{\omega}_{1}} & 0 & \cdots & 0\\
   0 & e^{\dot{\omega}_{2}} & \ddots & 0\\
   \vdots & \ddots & \ddots & \vdots \\
   0 & 0 & \cdots & e^{\dot{\omega}_{r}}
 \end{bmatrix}.\]

 \subsection{Background modeling using the quaternion-based DMD method}
 The Q-DMD method can deal with a color video sequence because the frames of the video are normally uniformly sampled, and a color pixel can be represented by a quaternion. It is assumed here that the color video contains $m$ frames, and each frame has $n$ pixels in total, then each frame can be vectorized into a pure quaternion vector denoted as
\[\mathbf{\dot{x}}_s= R(:,s)\cdot i+G(:,s)\cdot j +B(:,s)\cdot k, \] 
where $\mathbf{\dot{x}}_s \in \mathbb{H}^{n \times 1}$, and $s=1,2, \dots, m$. $R(:,s)$, $G(:,s)$, and $B(:,s) \in \mathbb{R}^{n \times 1}$ respectively represent the pixel values of the corresponding red, green and blue channels after vectorization of the $s$-th frame. Then, all frames of the color video are vectorized and form a quaternion matrix $\mathbf{\dot Q}=\begin{bmatrix} 
 \mathbf{\dot{x}} _1 & \mathbf{\dot{x}}_2 & \cdots & \mathbf{\dot{x}}_m
\end{bmatrix}  \in \mathbb{H}^{n \times m}$. And any given frame of the color video can be reconstructed by using Q-DMD method, that is by using Eq. (\ref{quadmd}). Given a color video sequence and assuming that frames are collected at uniform intervals, then the time points of these collected frames can form a vector $\mathbf{t}=
 \begin{bmatrix}
   t_1 & t_2 & \cdots & t_m
 \end{bmatrix}$. For convenience, the time point of collecting the first frame is recorded as $0$, and so on. The time point of collecting the $m$-th frame is recorded as $m-1$. Therefore, using the Q-DMD method, the full video sequence $\mathbf{\dot{X}}$ can be reconstructed by the following formula
 \begin{equation}
     \mathbf{X}_{\text{Q-DMD}} = \sum_{s=1}^{r}{\boldsymbol{\dot{\phi}}_s}e^{\dot{\omega}_s \mathbf{t}}\dot{b}_s=\mathbf{\dot{\Phi}} \mathbf{\dot{\Omega}^t}\mathbf{\dot{b}},\label{wholedmd}
 \end{equation}
where $\mathbf{\boldsymbol{\dot{\phi}}}_s \in \mathbb{H}^{n \times 1}$, $\mathbf{t} \in \mathbb{R}^{1 \times m}$. 
After obtaining the prediction of the whole video sequences by using Eq. (\ref{wholedmd}), the separation of foregrounds and the background can be obtained by thresholding the low-frequency modes of the corresponding eigenvalues. Generally, any part of the first video frame that does not change with time, or changes very slowly in time satisfies $\lvert{\dot{\omega}_p}\rvert \approx 0$, which means the Fourier mode corresponding to this part is near the origin of the quaternion space. Therefore, assume there exists $\dot{\omega}_p$ satisfies $\lvert{\dot{\omega}_p}\rvert \approx 0$, where $p \in \{1,2, \dots, r\}$, and 
the other Fourier modes (i.e. $\dot{\omega}_s$, $\forall \, s \neq p$) are not near the origin. Then we can obtain that
\begin{equation}
\begin{aligned}
    \mathbf{X}_{\text{Q-DMD}} &=\textbf{L} + \textbf{S}\\
    &=\underbrace{{\boldsymbol{\dot{\phi}}_p}e^{\dot{\omega}_p \mathbf{t}}\dot{b}_p}\limits_{\text{Background}}+\underbrace{\sum_{s \neq p}\mathbf{\boldsymbol{\dot{\phi}}}_s e^{\dot{\omega}_s \mathbf{t}}\dot{b}_s} \limits_{\text{Foreground}}. \label{sepmet}
    \end{aligned}
\end{equation}
The parameter $r$ is related to the dimensionality reduction and it is fixed to $m-1$, i.e., one less than the number of frames of the color video sequence in this paper. It is worth noting that quaternions do not satisfy the commutative law, so the product of the above Eq. (\ref{sepmet}) cannot change the order, and this is different from the real field generally, that is
 \[\mathbf{X}_{\text{Q-DMD}} = \boldsymbol{\dot{\phi}}_p e^{\dot{\omega}_p \mathbf{t}}\dot{b}_p+\sum_{s \neq p}\boldsymbol{\dot{\phi}}_s e^{\dot{\omega}_s \mathbf{t}}\dot{b}_s  \neq \dot{b}_p \boldsymbol{\dot{\phi}}_p e^{\dot{\omega}_p} +\sum_{s \neq p}\dot{b}_s\mathbf{\boldsymbol{\dot{\phi}}_s}e^{\dot{\omega}_s \mathbf{t}}.\]
 There exists another difference between DMD and Q-DMD. If we denote that the full video sequence reconstructed by DMD as $ \mathbf{X}$ and assume $\mathbf{X} \in \mathbb{R}^{n \times m}$, then each term of the DMD reconstruction is a complex matrix, i.e., $b_s \phi_s e^{{\omega}_s \mathbf{t}} \in \mathbf{C}^{n \times m}$, while when the DMD term is decomposed into approximate low rank and sparse components, real value output is required. However, each term of the Q-DMD reconstruction is a quaternion matrix, i.e., $\boldsymbol{\dot{\phi}}_s e^{\dot{\omega}_s \mathbf{t}}\dot{b}_s \in \mathbb{H}^{n \times m}$, and the coefficient matrices corresponding to the three imaginary units $i, j, k$ are real matrix and correspond to the three channels RGB of the color frame to be reconstructed.

 \section{Experimental results and discussion}
 \indent
 
 In this section, We conduct numerical experiments on two publicly available benchmark datasets, i.e., Scene background modeling dataset (SBMnet dataset) $\footnote{http://www.SceneBackgroundModeling.net}$ \cite{jodoin2017extensive} and Scene background initialization dataset (SBI dataset) $\footnote{http://sbmi2015.na.icar.cnr.it/SBIdataset.html}$ \cite{maddalena2015towards, bouwmans2017scene}, and illustrate that quaternion-based has the ability to separate color video sequences into background and foreground. We compare our method with the original DMD and several state-of-the-art approaches.
 
 \textbf{Quantitative assessment:} To evaluate the performance of our proposed method, we employ the following six metrics to measure the performance:
\begin{enumerate}[1)]
\item Average gray-level error (AGE): The average of the gray-level absolute difference between ground truth (GT) and the computed background (CB) image.
\item Percentage of error pixels (pEPs): The percentage of error pixels (number of pixels in CB whose value differs from the value of the corresponding pixel in GT by more than a threshold) with respect to the total number of pixels in the image.
\item Percentage of clustered error pixels (pCEPS): The percentage of clustered error pixels (number of pixels whose 4-connected neighbors are also error pixels) with respect to the total number of pixels in the image.
\item MultiScale Structural Similarity Index (MSSSIM): Estimate of the perceived visual distortion.
\item Peak-signal-to-noise-ratio (PSNR): Amounts to $10\text{log}_{10}((L-1)^2/\text{MSE})$ where $L$ is the maximum number of grey levels and MSE is the mean squared error between GT and CB images.
\item Color image quality measure (CQM): Based on a reversible transformation of the YUV color space and on the PSNR computed in the single YUV bands. It assumes values in db and the higher the CQM value, the better is the background estimate.
\end{enumerate}

\textbf{Experiment on SBMnet dataset:} We first simulated on SBMnet dataset to evaluate the proposed Q-DMD method for color video background modeling. The dataset consists of a wide range of challenging videos, such as camera jitter, intermittent object motion, background motion, abandoned object, illumination changes, long and short sequences of images and the ground truth of some videos was provided. The first frame of the eight videos from the SBMnet dataset are shown in Figure \ref{fig:SBM_input}. The spatial resolutions of those videos vary from $240 \times 240$ to $800\times 600$. The spatial resolutions of the videos "Basic (511) " and "IntermittenMotion (2007)" are $480\times 640$ and $576\times 720$, respectively, so the two videos were down-sampled by a factor of 2 to make the computational memory requirements manageable for personal computers. In order to process a color video efficiently, for the color video whose total number of frames in the SBMnet database is far more than 200 frames, we randomly selected 200 consecutive frames containing the foreground and background information for experiments and the frames that did not change over time were trimmed. We compared the performance of our method with the DMD, and several other existing state-of-the-art approaches, including MSCL \cite{javed2017background}, FSBE \cite{djerida2019robust}, LaBGen \cite{laugraud2017labgen}, NExBI \cite{mseddi2019real}, Photomontage \cite{agarwala2004interactive}, Bidirectional Analysis \cite{minematsu2016background}, BE-AAPSA \cite{ramirez2017temporal}, FC-FlowNet \cite{halfaoui2016cnn}, RMR \cite{ortego2016rejection}. 

 \indent
The results of the generated backgrounds based on Q-DMD on SBMnet dataset are displayed in Figure \ref{fig:SBM}, and the quantitatively evaluated results are shown in Table \ref{dmd and qdmd sbm} and \ref{qdmd and other sbm}. We use six evaluation metrics (AGE, pEPs, pCEPs, MSSSIM, PSNR and CQM) to demonstrate the performance of the proposed Q-DMD method. Numbers that are bold and underlined, numbers that are bold and dashed, and numbers that are bold represent the highest, second and third highest CQM values on each video, respectively. It can be found from Table \ref{dmd and qdmd sbm} that the performance of our Q-DMD method is better than that of DMD method in the vast majority of images. The CQM is a color image quality measure based on reversible luminance and chrominance (YUV) color transformation and PSNR measure \cite{yalman2013new}. Therefore, in addition to the advantage on PSNR, the good performance of our method on CQM also demonstrates the advantages of the quaternion-based model.

\indent
As can be seen from Table \ref{qdmd and other sbm}, our method achieves competitive CQM values in four videos ("511", "boulevardJam", "boulevard" and "Badminton") compared with other methods. For those videos with very short background exposure duration, such as "Board", "AVSS2007", "boulevardJam" and "Badminton", as shown in Figure \ref{fig:SBM}, there exists the shadow effect of moving objects in the generated background, which leads to a reduction in performance. However, for long sequence images which the background appears for a long time, such as "Basic (511)", "Jitter (boulevard)", the CQM value obtained by our method is almost the same as that obtained by FC-FlowNet method. In particular, our method achieved the second highest CQM value on "Clutter (boulevardJam)" and "Jitter (boulevard)". 
These results show that compared with the state-of-the-art approaches, Q-DMD method is competitive enough to extract background from challenging videos.
\begin{figure}
     \centering
     \begin{subfigure}[]{0.15\textwidth}
         \centering
         \includegraphics[width=2.5cm,height=1.8cm]{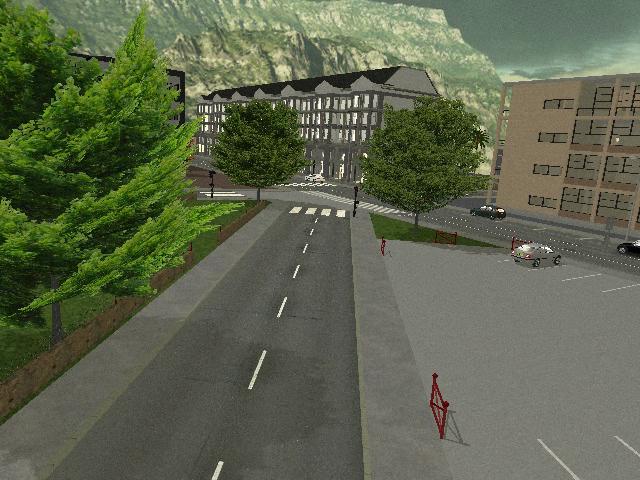}
         \caption*{$511$}
         \label{fig:511}
     \end{subfigure}
     \quad
     \begin{subfigure}[]{0.15\textwidth}
         \centering
         \includegraphics[width=2.5cm,height=1.8cm]{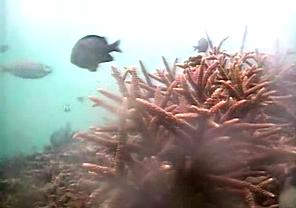}
         \caption*{Blurred}
         \label{fig:Blurred}
     \end{subfigure}
     \quad
     \begin{subfigure}[]{0.15\textwidth}
         \centering
         \includegraphics[width=2.5cm,height=1.8cm]{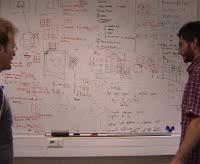}
         \caption*{Board}
         \label{fig:Board}
     \end{subfigure}
      \quad
     \begin{subfigure}[]{0.15\textwidth}
         \centering
         \includegraphics[width=2.5cm,height=1.8cm]{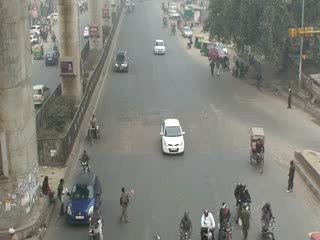}
         \caption*{boulevardJam}
         \label{fig:boulvardJam}
     \end{subfigure}
     \quad

     \begin{subfigure}[]{0.15\textwidth}
         \centering
         \includegraphics[width=2.5cm,height=1.8cm]{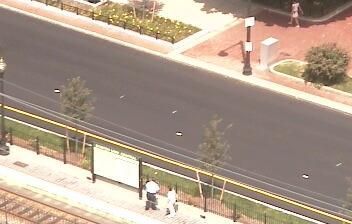}
         \caption*{boulevard}
         \label{fig:boulvardJam}
     \end{subfigure}
     \quad
      \begin{subfigure}[]{0.15\textwidth}
         \centering
         \includegraphics[width=2.5cm,height=1.8cm]{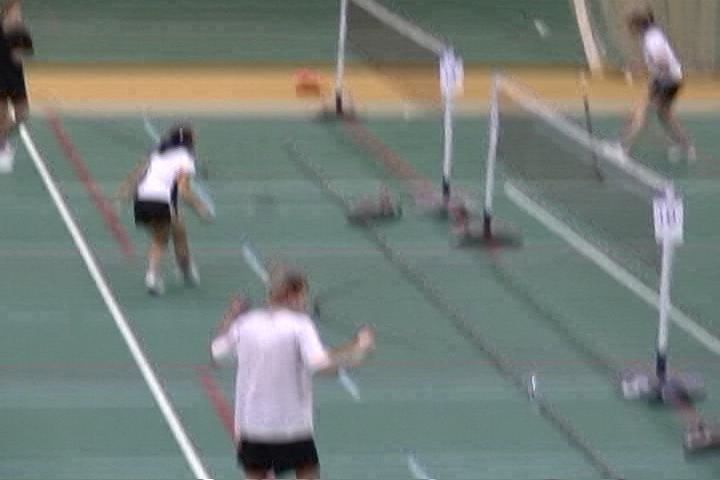}
         \caption*{Badminton}
         \label{fig:Badminton}
     \end{subfigure}
     \quad
      \begin{subfigure}[]{0.15\textwidth}
         \centering
         \includegraphics[width=2.5cm,height=1.8cm]{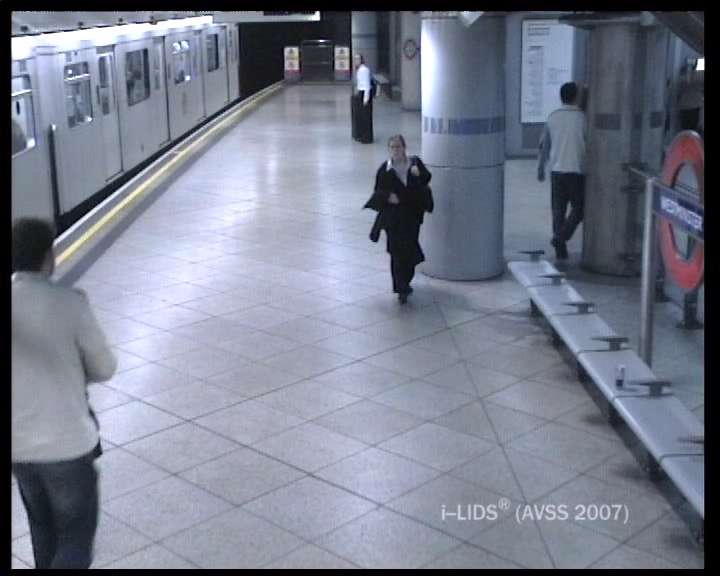}
         \caption*{AVSS2007}
         \label{fig:AVSS2007}
     \end{subfigure}
     \quad
      \begin{subfigure}[]{0.15\textwidth}
         \centering
         \includegraphics[width=2.5cm,height=1.8cm]{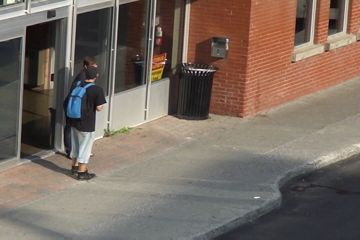}
         \caption*{Bus Station}
         \label{fig:busStation}
     \end{subfigure}
        \caption{The first frame of the eight videos from SBMnet dataset.}
        \label{fig:SBM_input}
\end{figure}

 \begin{table}[htbp]
\scriptsize
\caption{Quantitative quality indexes of DMD method and proposed Q-DMD method on the 8 color videos. }
\centering
\renewcommand\arraystretch{1.7}{
\setlength{\tabcolsep}{0.55mm}{
\begin{tabular}{lllllllllllll}
\hline
\multicolumn{1}{l}{\multirow{2}*{Videos}} &\multicolumn{2}{c}{AGE} & \multicolumn{2}{c}{pEPs}& \multicolumn{2}{c}{pCEPs}& \multicolumn{2}{c}{MSSSIM}& \multicolumn{2}{c}{PSNR}& \multicolumn{2}{c}{CQM}\\
\cline{2-13} 
\multicolumn{1}{l}{} & DMD & Q-DMD & DMD & Q-DMD & DMD & Q-DMD & DMD & Q-DMD & DMD & Q-DMD & DMD & Q-DMD\\
\hline

Basic (511)& 4.0569  &$\mathbf{4.0425}$  &0.0378    & $\mathbf{0.0373}$ & 0.0013&$\mathbf{0.0013 }$ & 0.9780&$\mathbf{0.9783}$ &30.2008&$\mathbf{30.2637}$&31.8512 &$\mathbf{ 32.1079}$ \\

\cline{1-13}

Basic (Blurred)   & 11.3377 &$\mathbf{9.4219}$ & 0.1512 & $\mathbf{0.1033}$ & 0.1233 &$\mathbf{0.2187}$ & 0.9951& 0.9950 & 37.6927&$\mathbf{38.0074}$ &36.9858 &$\mathbf{37.3188}$\\
\cline{1-13}

Clutter (board)  & 33.0994 & $\mathbf{28.5052}$  & 0.7336 & $\mathbf{0.4899}$ &0.6559 & $\mathbf{0.4098}$ &0.5064 & $\mathbf{0.5570}$&16.7050 &16.6674 & 17.7580& $\mathbf{18.1454}$\\
\cline{1-13}

Clutter (boulevardJam) & 4.1226 & $\mathbf{3.6353 }$  & 0.0095 & $\mathbf{0.0095}$ & 0.0029 &0.0031 & 0.9314 & $\mathbf{0.9325}$ & 31.4713 & $\mathbf{31.5366}$ & 32.5999 & $\mathbf{32.6002}$\\
\cline{1-13}

Jitter (boulevard) & 9.4585 & $\mathbf{9.4098}$ & 0.1375   & $\mathbf{0.1372}$   & 0.0277 & $\mathbf{0.0276}$ & 0.9116 & $\mathbf{0.9116}$ & 22.6940 & $\mathbf{22.7186}$& 24.1742 &$\mathbf{24.2013}$\\
\cline{1-13}

Jitter (Badminton) & 5.6188 & $\mathbf{3.9600}$ &  0.0283 &$\mathbf{ 0.0257}$ & 0.0132 &$\mathbf{0.0115}$ &0.9541 & $\mathbf{0.9570}$& 30.2312& $\mathbf{31.3778}$ &31.0495 &$\mathbf{32.2622}$\\
 \cline{1-13}

\makecell[l]{IntermittentMotion \\ (AVSS2007)}  &  17.2716 & 17.3175 &  0.2610  & $\mathbf{0.2589}$  & 0.2065 &$\mathbf{0.2031}$ &0.7127&0.7123 & 19.3044 &$\mathbf{19.3450}$ & 20.2489 &$\mathbf{20.2835}$\\
\cline{1-13}

\makecell[l]{IntermittentMotion \\ (BusStation)} &  6.4834 & $\mathbf{6.3356}$  &  0.0497  & 0.0611  & 0.0255 &0.0328 &0.9512 &0.9505 & 28.3237 &28.0328 & 29.2942&29.1266\\
\cline{1-13}
\label{dmd and qdmd sbm}
\end{tabular}}}
\end{table}

\begin{table}[htbp]
\scriptsize
\caption{Comparison of AGE, pEPs, pCEPs, MSSSIM, PSNR, and CQM between proposed Q-DMD method and other state-of-the-art background initialization methods on SBMnet dataset. Numbers that are bold and underlined, numbers that are bold and dashed, and numbers that are bold represent the highest, second and third highest CQM values on each video, respectively.}
\centering
\renewcommand\arraystretch{1.4}{
\setlength{\tabcolsep}{0.6mm}{
\begin{tabular}{llllllllllll}
\hline
Videos &  & RMR  & \makecell[c]{FC-Flow- \\ Net}   & \makecell[c]{BE-AA- \\ PSA}   & \makecell[c]{Bidirec-\\tional \\ Analysis}    & \makecell[c]{Photo- \\ montage} & NExBI & LaBGen & FSBE  & MSCL & Q-DMD  \\
\hline

\multirow{4}{*}{Basic (511)} & AGE & 5.3709 & 3.9735 & 4.0511  & 4.5214 & 5.79770 & 5.8916 & 4.8294 & 3.7414 & 4.2186 & 4.0425\\
 &  MSSSIM & 0.9457 & 0.9735 & 0.9744 & 0.9705 & 0.9488  & 0.9345  & 0.9475 & 0.9761 & 0.9703 & 0.9783\\ 
 &  PSNR & 26.3268 & 30.8573 & 30.0319 & 28.8396 & 26.6706 & 26.2599 & 27.6577 & 30.5804 & 30.0808 & 30.2637 \\ 
 &  CQM & 28.3708 &\underline{\textbf{32.5541}  }& 31.8292 & 30.7336 & 28.7131 & 28.3762 & 29.5002 &\dashuline {\textbf{32.2388}} & 31.8784 & \textbf{32.1079}\\ 
 
 \hline
 
\multirow{4}{*}{Basic (Blurred)} & AGE & 2.9910 & 2.6962  & 15.2057  & 2.4346 & 2.0214 &  2.5863 & 1.3990  &  3.1953 & 1.8057  & 9.4219 \\
 &  MSSSIM & 0.9699 &  0.9902 & 0.8924  & 0.9924 & 0.9941 & 0.9909  &  0.9975 & 0.9882  & 0.9930  & 0.9596\\ 
 &  PSNR & 30.4749 & 36.3751  &  22.4556 & 37.4609 & 38.2473 & 36.3266  & 41.5779  &  31.8882 & 38.1747  & 25.5840\\ 
 &  CQM & 31.0951 & 36.8199  &  23.3364 & 37.7694 & \dashuline {\textbf{38.5613}} & 36.6845  & \underline {\textbf {41.6541}} & 32.4592  & \textbf{38.5264}  &  26.4187\\ 
 \hline
 
 \multirow{4}{*}{Clutter (board)} & AGE & 7.0139 & 14.1523  &  25.4532 & 8.6680 & 13.4739 &  6.7738 & 8.0208  &  5.5795 & 6.0836  & 28.5052\\
 &  MSSSIM & 0.8337 & 0.8691  & 0.7629  & 0.8957 & 0.5029 & 0.9162  & 0.8491  & 0.9340  & 0.9322  & 0.5570\\ 
 &  PSNR & 28.3130 & 22.1587  &  15.6631 & 22.5686 & 18.8444 &  28.1156 & 27.4114  & 29.7845  & 29.2266  & 16.6674\\ 
 &  CQM & \dashuline {\textbf{29.3061} }& 23.2484  &  16.9305 & 23.7998 & 20.0911 & \textbf {29.0466} & 28.3713  &\underline {\textbf {30.7618}}  & 19.5182  & 18.1454 \\ 
 \hline
 
 \multirow{4}{*}{\makecell[l]{Clutter \\  (boulevardJam)}} & AGE & 4.8947 & 5.0200  &  5.1418 & 7.7770 & 12.1045 & 5.0516  & 8.2239  & 2.3321  &  5.0010 & 3.6353\\
 &  MSSSIM & 0.9282 & 0.8619  & 0.9219  & 0.8585 & 0.7604 & 0.8789  & 0.6851  &  0.9653 &  0.9100 & 0.9325\\ 
 &  PSNR & 29.2511 &  30.3476 &  28.9114 & 24.2706 & 20.9163 & 27.6165  & 22.6515  & 33.8660  &  28.5787 & 31.5366\\ 
 &  CQM & 30.5310 & \textbf {31.4309}  &  30.0986 & 25.4284 & 22.1436 &  28.8454 & 23.9772  & \underline {\textbf{35.0117}}  & 29.8099  & \dashuline{\textbf  {32.6002}}\\ 
 \hline
 
 \multirow{4}{*}{\makecell[c]{Jitter \\ (boulevard)} } & AGE & 13.4511 &  10.6830 &  10.8262 & 10.9028 & 9.7829 & 9.4182  &  10.1888 & 10.1060  &  5.8660 & 9.4098	\\
 &  MSSSIM & 0.8198 & 0.8956  & 0.8821  & 0.8715 & 0.8995 &  0.9076 &  0.8946 & 0.9003  &  0.9699 & 0.9116\\ 
 &  PSNR & 19.5784 & 22.5246  & 21.1393  & 20.2970 &  21.6868 & 22.2455  &  21.4645 & 22.5280  & 26.0077  & 22.7186\\ 
 &  CQM & 21.0043 & \textbf  {24.0208} & 22.5861  & 21.8557 & 23.0513 &  23.7767 & 22.9249  & 23.8107  &\underline {\textbf  {27.1642}} & \dashuline {\textbf   {24.2013}}\\ 
 \hline
 
 \multirow{4}{*}{\makecell[c]{Jitter \\ (Badminton)}  } & AGE & 8.4681 &  5.5368 &  4.3975 & 5.1114 & 4.2924 & 5.2289  & 2.2670  & 6.5668  &  2.4174 & 3.9600\\
 &  MSSSIM & 0.7365 & 0.9367  & 0.9204  & 0.8954 & 0.9237 &  0.8726 & 0.9805  & 0.8636  &  0.9729 & 0.9570\\ 
 &  PSNR & 23.8541 & 29.9097  & 29.1352  & 27.2333 & 29.6868 &  26.7733 & 34.6482  & 27.3765  &  33.8911 & 31.3778\\ 
 &  CQM & 24.7652 & 30.7442  & 29.9490  & 28.1560 & 30.4911 & 27.7054  &\underline {\textbf {35.2688} } &  28.2185 & \dashuline {\textbf  {34.5949} } & \textbf {32.2622}\\ 
 \hline
 
 \multirow{4}{*}{\makecell[l]{Intermittent- \\Motion\\ (AVSS2007)}} & AGE & 9.2767 &  11.6751 &  20.6172 & 11.9126 & 12.0167 &  12.3242 & 8.3062  & 11.5900  & 7.5256  & 17.3175\\
 &  MSSSIM & 0.9094 &  0.8726 & 0.7929  & 0.8198 & 0.8400 & 0.8799  &  0.9050 &  0.8830 &  0.9294 & 0.7123\\ 
 &  PSNR & 20.3096 & 20.7442  & 16.4960  & 19.6485 & 19.2860 & 21.1518  & 21.4577  & 20.1106  & 22.3138  & 19.3450\\ 
 &  CQM & 21.3404 & 21.7565  &  17.5546 & 20.7738 & 20.2173 &\textbf {22.0076}  &\dashuline {\textbf {22.3158} } & 21.2110  &\underline {\textbf {23.0990} } &  20.2835\\ 
 \hline

\multirow{4}{*}{\makecell[l]{Intermittent- \\Motion \\(Bus Station)} } & AGE & 3.1366 & 4.3513  & 4.5206  & 4.3423 & 6.5309 & 3.0622  & 7.0296  & 4.3997  & 3.4057  & 6.3356\\
 &  MSSSIM & 0.9631 & 0.9622  & 0.9621  & 0.9651 & 0.8872 & 0.9815  & 0.8889  & 0.9847  & 0.9821  & 0.9505\\ 
 &  PSNR & 30.3210 & 31.1049  & 30.0286  & 28.0407 & 21.8651 & 35.2212  & 22.0988  & 33.1076  & 34.2369  & 28.0328\\ 
 &  CQM & 31.4297 & 31.7573  & 30.9833  & 29.0178 & 22.8979 &\underline {\textbf {35.7016}}  & 23.0664  &\textbf {33.7125}  & \dashuline {\textbf{34.8402}}  &  29.1266
\\ 
 \hline
 \label{qdmd and other sbm}
\end{tabular}}}
\end{table}

\begin{figure}
     \centering
     \begin{subfigure}{0.15\textwidth}
         \centering
         \includegraphics[width=2.5cm,height=1.8cm]{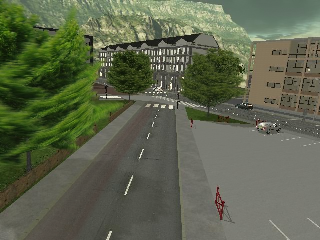}
         \caption*{$511$}
         \label{fig:511}
     \end{subfigure}
     \quad
     \begin{subfigure}{0.15\textwidth}
         \centering
         \includegraphics[width=2.5cm,height=1.8cm]{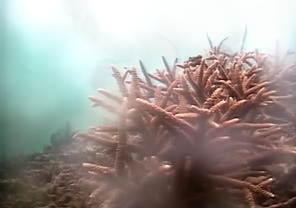}
         \caption*{Blurred}
         \label{fig:Blurred}
     \end{subfigure}
     \quad
     \begin{subfigure}{0.15\textwidth}
         \centering
         \includegraphics[width=2.5cm,height=1.8cm]{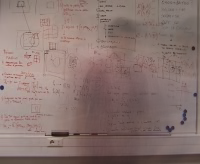}
         \caption*{Board}
         \label{fig:Board}
     \end{subfigure}
      \quad
     \begin{subfigure}{0.15\textwidth}
         \centering
         \includegraphics[width=2.5cm,height=1.8cm]{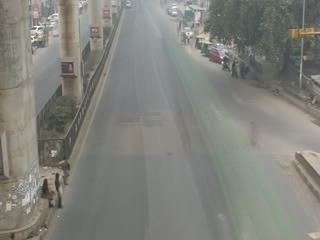}
         \caption*{boulevardJam}
         \label{fig:boulvardJam}
     \end{subfigure}
     \quad

     \begin{subfigure}{0.15\textwidth}
         \centering
         \includegraphics[width=2.5cm,height=1.8cm]{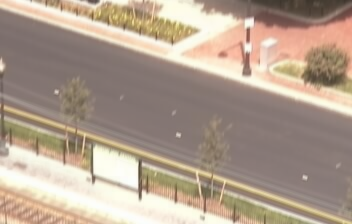}
         \caption*{boulevard}
         \label{fig:boulvardJam}
     \end{subfigure}
     \quad
      \begin{subfigure}{0.15\textwidth}
         \centering
         \includegraphics[width=2.5cm,height=1.8cm]{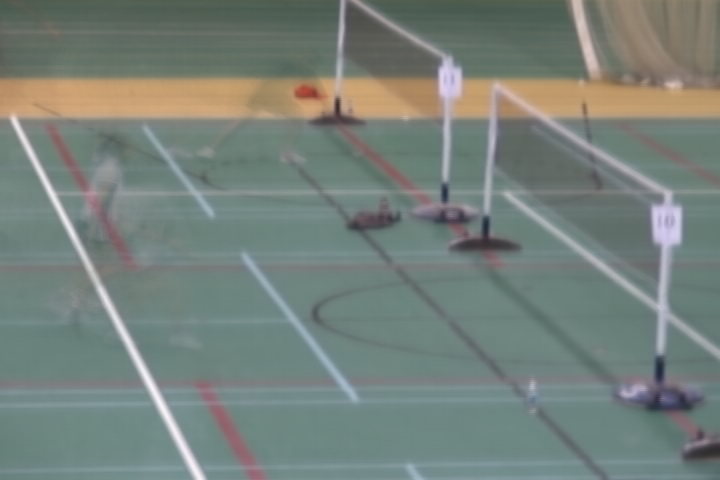}
         \caption*{Badminton}
         \label{fig:Badminton}
     \end{subfigure}
     \quad
      \begin{subfigure}{0.15\textwidth}
         \centering
         \includegraphics[width=2.5cm,height=1.8cm]{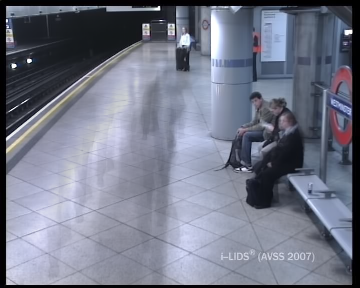}
         \caption*{AVSS2007}
         \label{fig:AVSS2007}
     \end{subfigure}
     \quad
      \begin{subfigure}{0.15\textwidth}
         \centering
         \includegraphics[width=2.5cm,height=1.8cm]{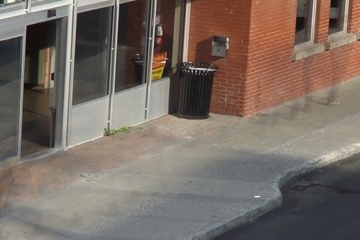}
         \caption*{Bus Station}
         \label{fig:busStation}
     \end{subfigure}
        \caption{Results of the generated backgrounds on SBMnet dataset.}
        \label{fig:SBM}
\end{figure}

\textbf{Experiment on the SBI dataset:} We conducted additional experiments on the SBI dataset to evaluate our background modeling results by comparing them with the background initialization results obtained by DMD method. The SBI dataset also contains a large amount of data extracted from original publicly available sequences, which are frequently used in the literature to evaluate background initialization algorithms \cite{maddalena2015towards}. We evaluated our approach on 9 sequences of the SBI dataset. The remaining videos, "Cavignal" and "CAVIAR1" are objects with intermittent object motion which are not defined as moving objects, and "Snellen" and "PeopleAndFoliage" are those videos with very short background exposure duration, and "Toscana" has only five frames and is not suitable for Q-DMD method. The background frames that did not change over time were trimmed and among the nine videos, except those with less than 200 frames, we extracted a set of 200 frames of continuous videos for experiments to reduce the computing time.  We calculated the six metrics (AGE, pEPs, pCEPs, MSSSIM, PSNR and CQM) suggested by the SBI dataset to measure the reconstructed background models, and the results are shown in Table \ref{SBI:TABLE}. Generated backgrounds with Q-DMD method on SBI dataset are shown in Figure \ref{fig:SBI}. 

\indent
The Q-DMD method only considers the part of the image sequence that does not change with time as the background model. When this condition is met, the Q-DMD can achieve better performance. For example, for the three sequences, "HighwayI", "HighwayII" and "IBMtest2", the foregrounds (people or vehicles) do not remain stationary anywhere in the scene throughout the sequence. Therefore, for these color video sequences, Q-DMD can generate a superb background model by eliminating the the foreground part of continuous motion. However, when this condition is not satisfied, it will affect the results of Q-DMD. For example, on the " Board" sequence, the background models reconstructed by Q-DMD and DMD had human shadow. This is because there are two men in the video sequence who occupy a large proportion of the background in the whole video sequence, and the man standing on the right side of the sequence has been rotating for some time in the sequence, which leads to the background covered by the foregrounds for a long time. As can be seen from Figure \ref{fig:SBI}, for the three color videos, "Board", "HighwayI" and "HighwayII", the backgrounds reconstructed by DMD method had obvious different color intensity compared to the ground truth, however the background models generated by Q-DMD method do not have this problem. Therefore, we have reason to believe that this is mainly due to the advantage of quaternion in representing color pixel values. As shown in Table 1, for most videos on the SBI dataset, our AGE, pEPs, and pCEPs were lower than those of the DMD method, which indicates that it has the lower pixel-wise difference between the reconstructed background model and the ground truth model. PSNR, MS-SSIM, and CQM also show that the Q-DMD method has more obvious advantages over DMD method. \\

\begin{table}[htbp]
\scriptsize
\caption{Evaluation results (SBI dataset)}
\label{SBI:TABLE}
\centering
\renewcommand\arraystretch{1.5}{
\setlength{\tabcolsep}{0.6mm}{
\begin{tabular}{lllllllllllll}
\hline
\multicolumn{1}{l}{\multirow{2}*{Videos}} &\multicolumn{2}{c}{AGE} & \multicolumn{2}{c}{pEPs\%}& \multicolumn{2}{c}{pCEPs\%}& \multicolumn{2}{c}{MSSSIM}& \multicolumn{2}{c}{PSNR}& \multicolumn{2}{c}{CQM}\\
\cline{2-13}
\multicolumn{1}{l}{} & DMD & Q-DMD & DMD & Q-DMD & DMD & Q-DMD & DMD & Q-DMD & DMD & Q-DMD & DMD & Q-DMD \\
\hline
\text{Board} & 28.3809 & $\mathbf{24.5430}$  & 67.8445 & $\mathbf{43.4299}$ &59.6707 & $\mathbf{36.6799}$ &0.5406 & $\mathbf{0.5967}$&18.0398 &17.9744 & 17.7580& $\mathbf{18.1454}$\\
\cline{1-13}
$\text{Candelam1}\_ \text{m1}.10$ & 3.1744  &$\mathbf{3.1249}$  &1.3780    & 1.4076 & 0.7339&0.7576  & 0.9653&$\mathbf{0.9654}$ & 31.6111&$\mathbf{31.6436}$&30.8939 &30.8877 \\
\cline{1-13}
CAVIAR2 & 1.0730 & $\mathbf{1.0687}$  &  0.0041 & $\mathbf{0.0010 }$ &0.0000 &$\mathbf{0.0000 }$ & 0.9988&$\mathbf{ 0.9988}$ & 43.3340&$\mathbf{43.7556}$ &42.5350 &$\mathbf{42.9473}$\\
\cline{1-13}
Foliage& 22.2094 & $\mathbf{20.8727}$ &43.2361   &48.8854   & 31.1806 & 34.1458&0.7975&0.7644 &19.2886 & $\mathbf{20.0956}$&19.2072 &$\mathbf{19.6507}$\\
\cline{1-13}
HallAndMonitor&  3.7244 & 3.8173  &  2.9616  &$\mathbf{2.9096  }$  &1.6359 &$\mathbf{ 1.6098}$ &0.9561  &$\mathbf{0.9561}$ &29.9361 &$\mathbf{29.9827 }$ & 29.9230&$\mathbf{29.9749}$\\
\cline{1-13}
HighwayI& 43.2153 & $\mathbf{5.4775}$ &  99.9688 &$\mathbf{ 0.1198}$ & 99.8958 &$\mathbf{0.0039}$ &0.9015 & $\mathbf{0.9649}$& 15.2265& $\mathbf{31.3369}$ &15.1443 &$\mathbf{31.2429}$\\
\cline{1-13}
HighwayII&3.1864  &$\mathbf{2.7014}$  &0.3164 &$\mathbf{0.3125 }$  &0.0000& 0.0013 & 0.9911 & $\mathbf{ 0.9917}$&35.1937& $\mathbf{36.1269}$&34.8043&$\mathbf{35.6207}$\\
\cline{1-13}
HumanBody2& 7.1215   &$\mathbf{7.0930}$  &6.6536   &6.7174   & 3.9896 &4.0156 &0.9511  &$\mathbf{0.9515 }$ &25.9354 & $\mathbf{25.9580 }$&25.5177 &$\mathbf{25.4730}$\\
\cline{1-13}
IBMtest2& 4.9182 &  $\mathbf{4.8208}$ & 2.7122 &$\mathbf{2.2513}$  & 0.8477& $\mathbf{ 0.6133 }$&0.9826  & $\mathbf{0.9827}$& 30.7902&$\mathbf{30.9653 } $& 29.9444&$\mathbf{30.0359}$\\
\cline{1-13}
\hline
\end{tabular}}}
\end{table}

\begin{figure}
\centering
 \begin{subfigure}{0.1\textwidth}
         \centering
         \setlength{\abovecaptionskip}{0.4cm}
         \caption*{Board}
         \label{fig:boulvardJam}
         \includegraphics[width=1.61cm,height=1.5cm]{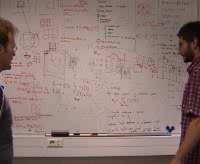}
     \end{subfigure} 
     \hfill
\begin{subfigure}{0.1\textwidth}
         \centering
        \setlength{\abovecaptionskip}{0cm}
         \caption*{$\text{Candelam1-}\\ \_ \text{m1}.10$}
         \label{fig:boulvardJam}
         \includegraphics[width=1.61cm,height=1.5cm]{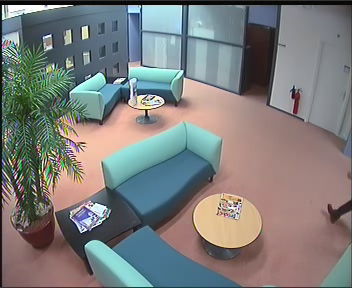}
     \end{subfigure} 
     \hfill
\begin{subfigure}{0.1\textwidth}
         \centering
         \setlength{\abovecaptionskip}{0.4cm}
         \caption*{CAVIAR2}
         \label{fig:boulvardJam}
         \includegraphics[width=1.61cm,height=1.5cm]{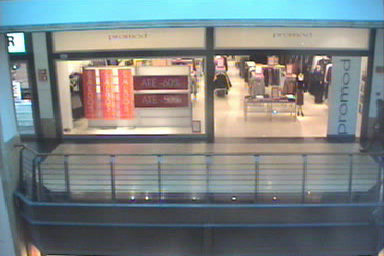}
     \end{subfigure} 
     \hfill
     \begin{subfigure}{0.1\textwidth}
         \centering
         \setlength{\abovecaptionskip}{0.4cm}
         \caption*{Foliage}
         \label{fig:boulvardJam}
         \includegraphics[width=1.61cm,height=1.5cm]{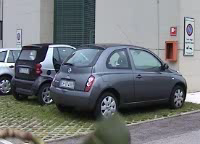}
     \end{subfigure} 
     \hfill
     \begin{subfigure}{0.1\textwidth}
         \centering
         \setlength{\abovecaptionskip}{0.0cm}
         \caption*{HallAndMonitor}
         \label{fig:boulvardJam}
         \includegraphics[width=1.61cm,height=1.5cm]{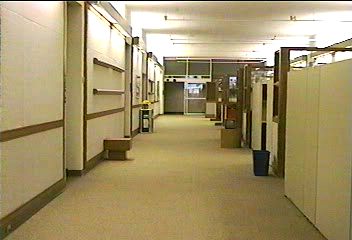}
     \end{subfigure} 
     \hfill
     \begin{subfigure}{0.1\textwidth}
         \centering
         \setlength{\abovecaptionskip}{0.4cm}
         \caption*{HighwayI}
         \label{fig:boulvardJam}
        \includegraphics[width=1.61cm,height=1.5cm]{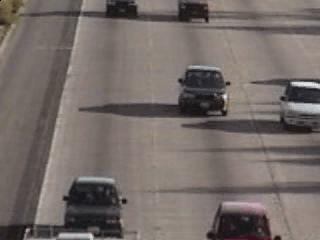}
     \end{subfigure} 
     \hfill
     \begin{subfigure}{0.1\textwidth}
         \centering
         \setlength{\abovecaptionskip}{0.4cm}
         \caption*{HighwayII}
         \label{fig:boulvardJam}
         \includegraphics[width=1.61cm,height=1.5cm]{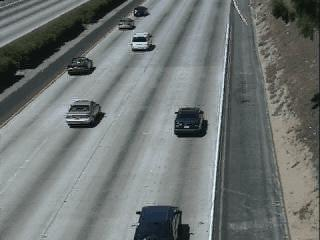}
     \end{subfigure} 
     \hfill
     \begin{subfigure}{0.1\textwidth}
         \centering
         \setlength{\abovecaptionskip}{0.0cm}
         \caption*{HumanBody2}
         \label{fig:boulvardJam}
        \includegraphics[width=1.61cm,height=1.5cm]{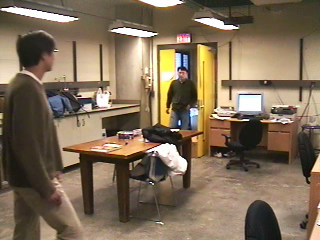}
     \end{subfigure}
     \hfill
     \begin{subfigure}{0.1\textwidth}
         \centering
         \setlength{\abovecaptionskip}{0.4cm}
         \caption*{IBMtest2}
         \label{fig:boulvardJam}
        \includegraphics[width=1.61cm,height=1.5cm]{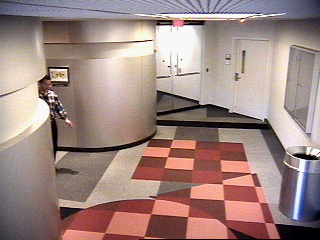}
     \end{subfigure}\hfill\\
     
      \begin{subfigure}{0.1\textwidth}
         \centering
         \includegraphics[width=1.61cm,height=1.5cm]{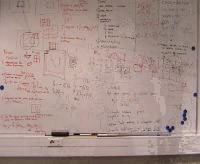}
     \end{subfigure} 
     \hfill
\begin{subfigure}{0.1\textwidth}
         \centering
         \includegraphics[width=1.61cm,height=1.5cm]{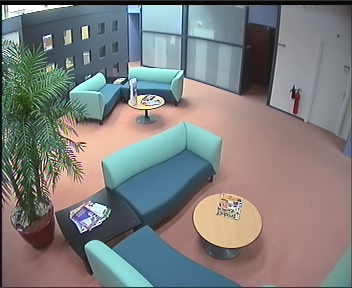}
     \end{subfigure} 
     \hfill
\begin{subfigure}{0.1\textwidth}
         \centering
         \includegraphics[width=1.61cm,height=1.5cm]{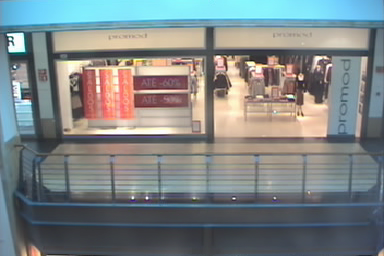}
     \end{subfigure} 
     \hfill
     \begin{subfigure}{0.1\textwidth}
         \centering
         \includegraphics[width=1.61cm,height=1.5cm]{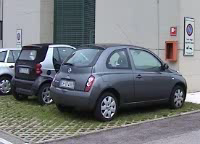}
     \end{subfigure} 
     \hfill
     \begin{subfigure}{0.1\textwidth}
         \centering
         \includegraphics[width=1.61cm,height=1.5cm]{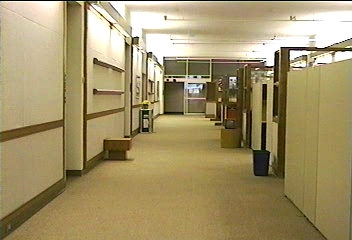}
     \end{subfigure} 
     \hfill
     \begin{subfigure}{0.1\textwidth}
         \centering
        \includegraphics[width=1.61cm,height=1.5cm]{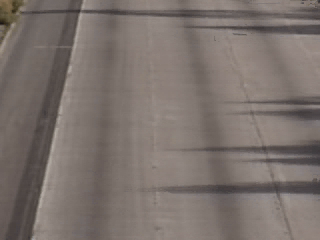}
     \end{subfigure} 
     \hfill
     \begin{subfigure}{0.1\textwidth}
         \centering
         \includegraphics[width=1.61cm,height=1.5cm]{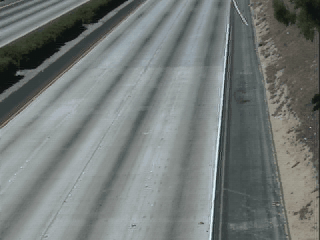}
     \end{subfigure} 
     \hfill
     \begin{subfigure}{0.1\textwidth}
         \centering
        \includegraphics[width=1.61cm,height=1.5cm]{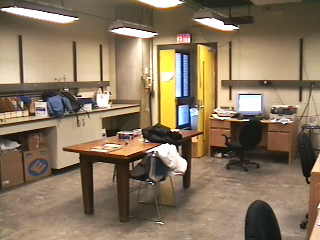}
     \end{subfigure}
     \hfill
     \begin{subfigure}{0.1\textwidth}
         \centering
        \includegraphics[width=1.61cm,height=1.5cm]{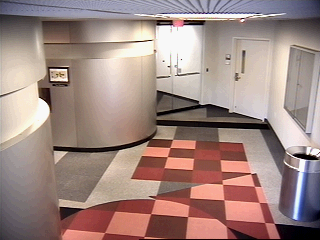}
     \end{subfigure}\hfill\\
     
       \begin{subfigure}{0.1\textwidth}
         \centering
         \includegraphics[width=1.61cm,height=1.5cm]{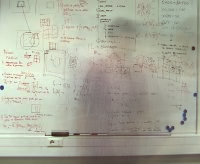}
     \end{subfigure} 
     \hfill
\begin{subfigure}{0.1\textwidth}
         \centering
         \includegraphics[width=1.61cm,height=1.5cm]{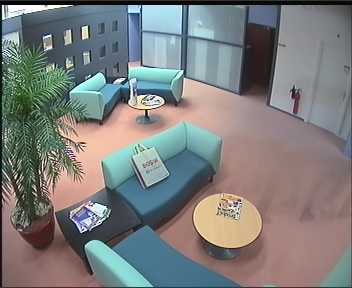}
     \end{subfigure} 
     \hfill
\begin{subfigure}{0.1\textwidth}
         \centering
         \includegraphics[width=1.61cm,height=1.5cm]{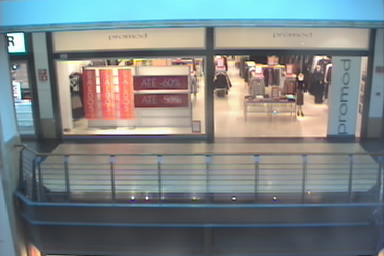}
     \end{subfigure} 
     \hfill
     \begin{subfigure}{0.1\textwidth}
         \centering
         \includegraphics[width=1.61cm,height=1.5cm]{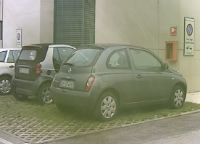}
     \end{subfigure} 
     \hfill
     \begin{subfigure}{0.1\textwidth}
         \centering
         \includegraphics[width=1.61cm,height=1.5cm]{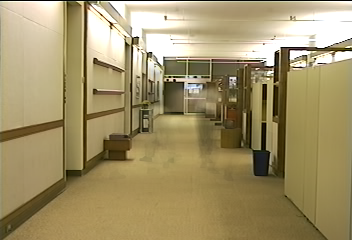}
     \end{subfigure} 
     \hfill
     \begin{subfigure}{0.1\textwidth}
         \centering
        \includegraphics[width=1.61cm,height=1.5cm]{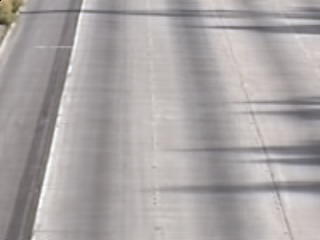}
     \end{subfigure} 
     \hfill
     \begin{subfigure}{0.1\textwidth}
         \centering
         \includegraphics[width=1.61cm,height=1.5cm]{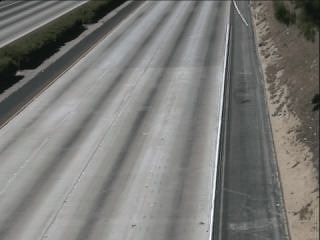}
     \end{subfigure} 
     \hfill
     \begin{subfigure}{0.1\textwidth}
         \centering
        \includegraphics[width=1.61cm,height=1.5cm]{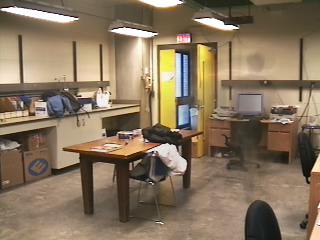}
     \end{subfigure}
     \hfill
     \begin{subfigure}{0.1\textwidth}
         \centering
        \includegraphics[width=1.61cm,height=1.5cm]{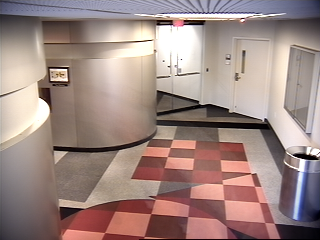}
     \end{subfigure}\hfill\\
     
       \begin{subfigure}{0.1\textwidth}
         \centering
         \includegraphics[width=1.61cm,height=1.5cm]{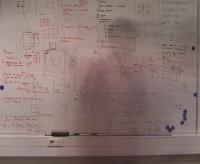}
     \end{subfigure} 
     \hfill
\begin{subfigure}{0.1\textwidth}
         \centering
         \includegraphics[width=1.61cm,height=1.5cm]{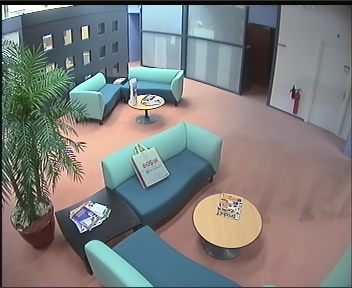}
     \end{subfigure} 
     \hfill
\begin{subfigure}{0.1\textwidth}
         \centering
         \includegraphics[width=1.61cm,height=1.5cm]{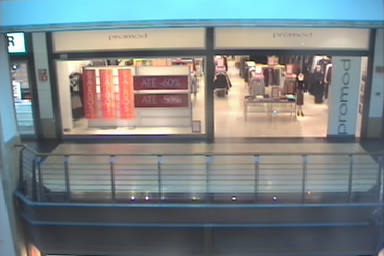}
     \end{subfigure} 
     \hfill
     \begin{subfigure}{0.1\textwidth}
         \centering
         \includegraphics[width=1.61cm,height=1.5cm]{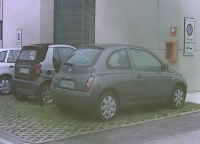}
     \end{subfigure} 
     \hfill
     \begin{subfigure}{0.1\textwidth}
         \centering
         \includegraphics[width=1.61cm,height=1.5cm]{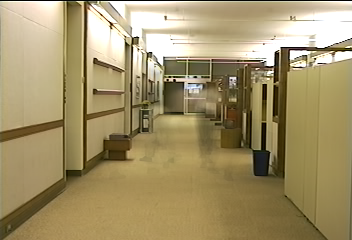}
     \end{subfigure} 
     \hfill
     \begin{subfigure}{0.1\textwidth}
         \centering
        \includegraphics[width=1.61cm,height=1.5cm]{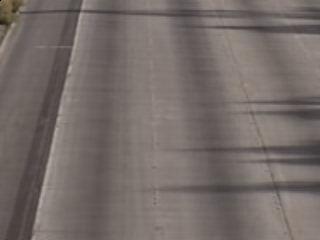}
     \end{subfigure} 
     \hfill
     \begin{subfigure}{0.1\textwidth}
         \centering
         \includegraphics[width=1.61cm,height=1.5cm]{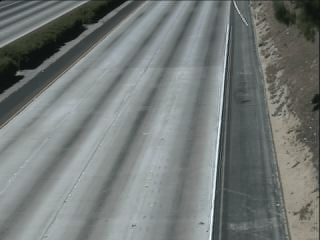}
     \end{subfigure} 
     \hfill
     \begin{subfigure}{0.1\textwidth}
         \centering
        \includegraphics[width=1.61cm,height=1.5cm]{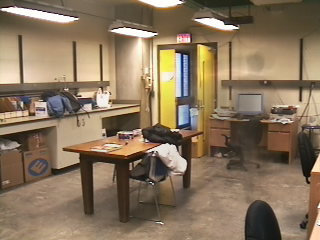}
     \end{subfigure}
     \hfill
     \begin{subfigure}{0.1\textwidth}
         \centering
        \includegraphics[width=1.61cm,height=1.5cm]{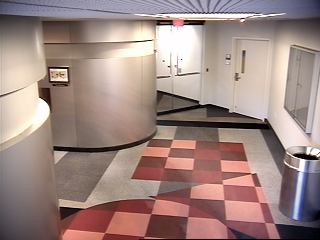}
     \end{subfigure}\hfill\\
\caption{\label{fig:SBI} First-row: the first frame of the nine videos from the SBI dataset; second-row: Ground truth of the background; Third-row: generated backgrounds with DMD method; Last-row: generated backgrounds using Q-DMD method.}
\end{figure}

\section{Conclusions}
In this paper, we propose a quaternion-based DMD (Q-DMD) method for color video background modeling using quaternion matrix analysis. Quaternion representation treats color pixels as vector units rather than scalars, naturally processes the coupling between color channels, and fully retains the color information of the color image or video. A high-order real tensor can be used to present a color video, however, the color structure will be destroyed in the process of matricization (e.g., mode-k unfolding). Using the standard eigenvalue of quaternion, we establish the spectral decomposition of the quaternion matrix, and then extend DMD to quaternion system, i.e., Q-DMD. The results demonstrate that compared with DMD, our method shows advantages in reconstructing color video background model, and compared with several state-of-art methods, the proposed method still has competitive performance (w.r.t., CQM).

Note that the proposed method can reconstruct a better background model for videos that meet the requirements (i.e., Q-DMD method only considers the parts of the color videos that do not change with time as the background model). On the contrary, for videos that do not meet the conditions, the effect of the reconstructed background model will be reduced. Therefore, we consider combining Q-DMD with other low rank sparse algorithms to make Q-DMD more applicable in the future work.

\normalem
\bibliographystyle{unsrt}
\bibliography{sample}

\end{document}